\documentclass{article} 
\usepackage{iclr2026_conference,times}

\usepackage[colorlinks=true,citecolor=teal]{hyperref}
\usepackage{url}

\usepackage{graphicx}   
\usepackage{subfigure}  
\usepackage{subcaption}
\usepackage{booktabs}  

\usepackage{amsmath}    
\usepackage{amssymb}   
\usepackage{mathtools}  
\usepackage{amsthm}     
\theoremstyle{plain}
\newtheorem{theorem}{Theorem}[section]

\theoremstyle{definition}

\theoremstyle{remark}
\newtheorem{remark}[theorem]{Remark}
\usepackage{algorithm}
\usepackage{algorithmic}
\title{Latent Wasserstein Adversarial Imitation Learning}

\author{Siqi Yang\qquad Kai Yan\qquad Alexander G. Schwing\qquad Yu-Xiong Wang 
\\
University of Illinois Urbana-Champaign \\
\texttt{\{siqiyang, kaiyan3, aschwing, yxw\}@illinois.edu} \\
\texttt{https://github.com/JackyYang258/LWAIL}
}

\iclrfinalcopy 
\begin{document}

\maketitle

\begin{abstract}
Imitation Learning (IL) enables agents to mimic expert behavior by learning from demonstrations. However, traditional IL methods require large amounts of medium-to-high-quality demonstrations as well as actions of expert demonstrations, both of which are often unavailable. To reduce this need, we propose Latent Wasserstein Adversarial Imitation Learning (LWAIL), a novel adversarial imitation learning framework that focuses on state-only distribution matching. It benefits from the Wasserstein distance computed in a dynamics-aware latent space. This dynamics-aware latent space differs from prior work and is obtained via a pre-training stage, where we train the Intention Conditioned Value Function (ICVF) to capture a dynamics-aware structure of the state space using a small set of randomly generated state-only data. We show that this enhances the policy's understanding of state transitions, enabling the learning process to use only one or a few state-only expert episodes to achieve expert-level performance. Through experiments on multiple MuJoCo environments, we demonstrate that our method outperforms prior Wasserstein-based IL methods and prior adversarial IL methods, achieving better results across various tasks.

\end{abstract}

\section{Introduction}

As a powerful tool for solving sequential decision-making problems, Reinforcement Learning (RL) has achieved remarkable success in recent years across various fields, such as gaming~\citep{silver2016mastering} and training large language models~\citep{ramamurthy2023rafet, guo2025deepseek}. However, RL relies heavily on well-defined reward signals~\citep{JainICCV2021,Li2021MURALMU}, which can be difficult to obtain in real-world settings (e.g., robot control~\citep{ibarz2021train} with varied target tasks) or may require careful, environment-specific considerations~\citep{metaworld}.

Imitation Learning (IL) offers a compelling alternative by learning directly from expert demonstrations, thus bypassing the need for explicit reward design. Within IL, a major challenge are the frequently missing expert actions, encouraging adoption of Imitation Learning from Observations (LfO), which uses only sequences of expert states. However, even these state-only demonstrations are often expensive to acquire, making it crucial to develop methods that can learn from a minimal amount of expert data. Adversarial Imitation Learning (AIL) methods~\citep{GAIFO, OPOLO}, which learn by matching the distribution of agent states to that of the expert, are a popular approach to LfO. 
Many such AIL methods measure the distribution difference by $f$-divergence, such as KL-, $\chi^2$- or JS-divergence.
However, $f$-divergences require “distribution coverage”, i.e., the distributions in $f$-divergence must be on the same support set to avoid numerical error. This could lead to additional theoretical constraints, especially when an additional dataset of non-expert data is involved; for example, offline LfO methods such as SMODICE~\citep{smodice} require the non-expert state distribution to fully cover the expert state distribution, which does not necessarily hold in practice especially when the non-expert data quality is low (e.g., random).

To address these limitations, the Wasserstein distance has gained popularity~\citep{WGAN, WDAIL}. 
However, many prior Wasserstein IL works that employ the Kantorovich-Rubinstein (KR) dual~\citep{WDAIL, IQlearn, sun2021softdice} overlook an important issue: the distance metric between individual states is rather simplistic. Indeed, to compute the Wasserstein distance between distributions of states,  a ``ground metric'' or cost for measuring the distance between any two individual states is first defined. For this, the Euclidean distance is common~\citep{WDAIL,IQlearn}. However, it fails to capture the environment's dynamics (see Fig.~\ref{fig:teaser}(a) for an illustration). This deficiency can severely mislead the learning process. Existing solutions bypass the issue by using the primal Wasserstein distance, which however introduces other complexities and requires surrogates~\citep{yan2023offline, PWIL}. This raises a critical question: {\color{black}\textit{Can we learn a distance metric that encodes the environment's dynamics from a minimum amount of state-only low-quality data}, allowing an agent to learn efficiently from just a few, state-only expert trajectories?}

\begin{figure}[t]
    \centering
    \begin{minipage}[c]{0.32\linewidth}
\subfigure[Better distance metric]{\includegraphics[height=4cm]{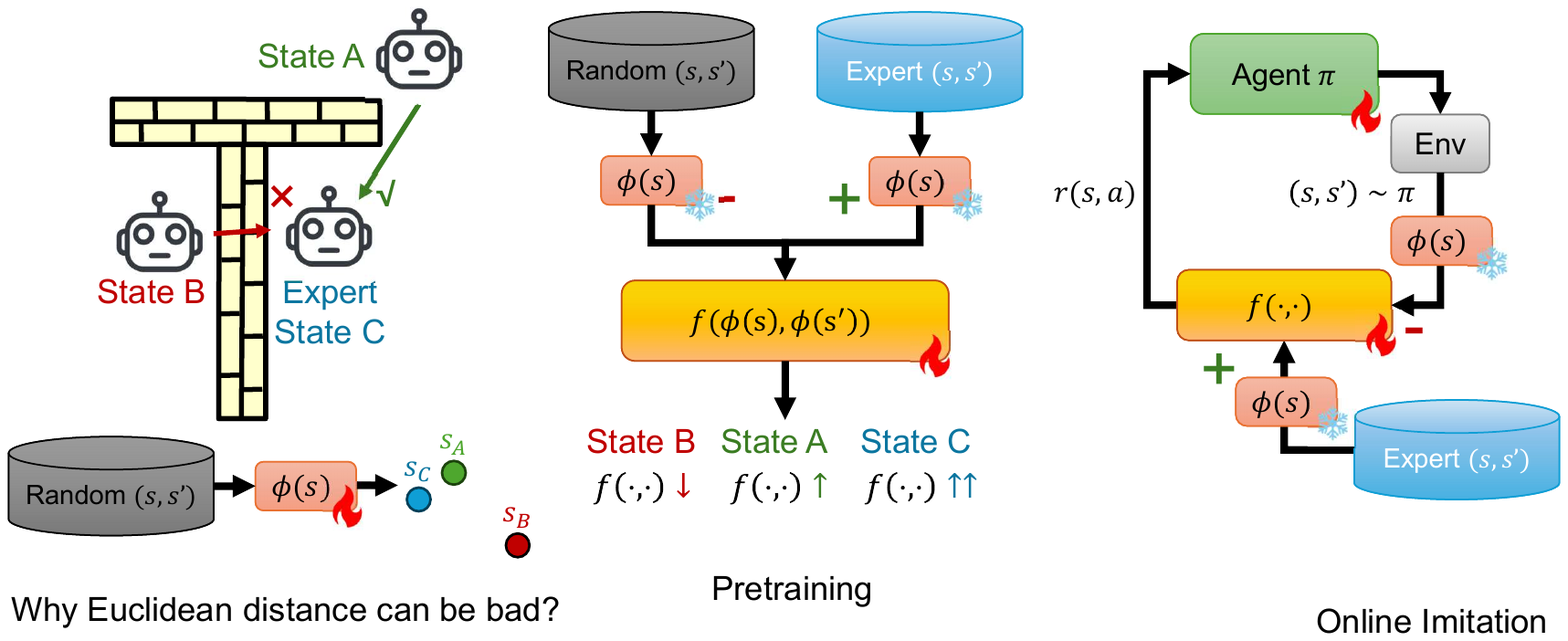}}
    \end{minipage}
    \begin{minipage}[c]{0.32\linewidth}
\subfigure[Pre-training stage]{\includegraphics[height=4cm]{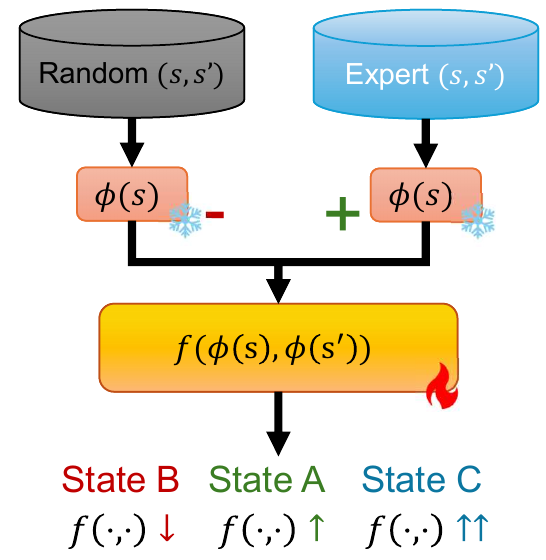}}
    \end{minipage}
    \begin{minipage}[c]{0.32\linewidth}
\subfigure[Imitation stage]{\includegraphics[height=4cm]{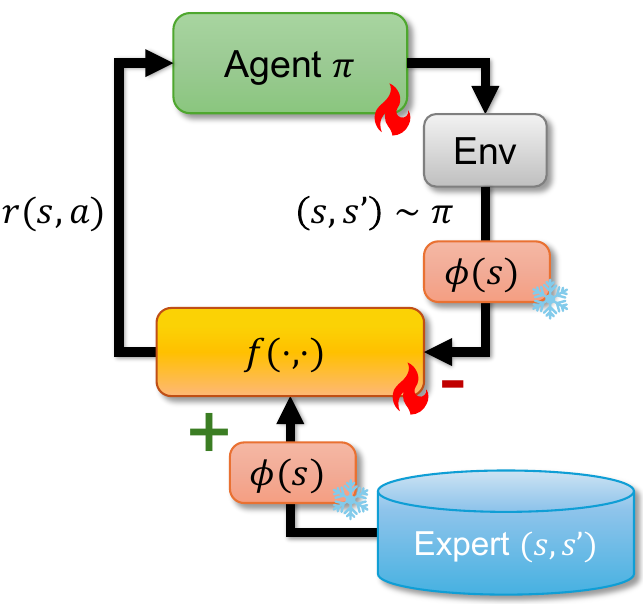}}
    \end{minipage}
    \caption{Illustrating our motivation for a better distance metric and an outline of the algorithm. Panel (a) illustrates a case where the Euclidean distance between states is not a good metric: state B is closer to expert state C, but it is apparently less desirable than more distant state A, as it cannot reach C. To fix, we use random data and ICVF to find a more meaningful embedding space, as shown in the lower half. 
    Panel (b), together with the lower half of panel (a), shows our pre-training stage: we first train ICVF to obtain $\phi(s)$, which serves as a reward for our agent in the online stage shown in panel (c). Fire indicates trainable modules, and snowflakes indicate frozen modules.}
    \label{fig:teaser}
\end{figure}

To answer this question, we introduce a novel Wasserstein AIL algorithm that directly addresses this foundational issue. We propose a two-stage process. 
First, in the pre-training stage, we leverage a small ($1\%$ of online rollouts) number of unstructured, low-quality (e.g., random) state-only data to train an Intention-Conditioned Value Function (ICVF)~\citep{ICVF}. The resulting embedding captures a rich, dynamics-aware notion of reachability between states. 
Second, during the imitation stage, we freeze this ICVF embedding and use the Euclidean distance in this new latent space as the cost function within a standard Wasserstein AIL framework. By modifying the agent and discriminator to operate in this learned space, we fix the core issue of prior methods. 

We find that this simple approach allows an agent to effectively match the expert's state distribution, enabling highly efficient imitation from minimal expert data. We validate our approach on Umaze and challenging locomotion tasks in the MuJoCo environment from the D4RL benchmark~\citep{d4rl}, and show that the latent space grasps the transition dynamics much better than the vanilla Euclidean distance. In terms of imitation learning, we achieve strong results using only a single trajectory of state-based expert data.

We summarize our contributions as follows:
1) We show that the ICVF latent space captures a dynamics-aware ground metric even if only a small amount of (possibly low-quality) state-only data is available; this is a first direct remedy for the known geometric limitations of the Euclidean distance in prior Wasserstein occupancy matching methods.
2) We propose a simple but effective improvement to current Wasserstein AIL methods achieving expert-level performance with a single state-only expert trajectory;
3) We empirically show that our method outperforms various baselines on multiple testbeds, proving that a better distance metric can greatly benefit AIL.

\section{Preliminaries}
\label{sec:prelim}
\textbf{Markov Decision Process (MDP).} 
A Markov Decision Process (MDP) is a mathematical framework which describes the interactions of an agent with an environment at discrete time steps. It is defined by the tuple $(\mathcal{S}, \mathcal{A}, P, R, \gamma)$, where $\mathcal{S}$ represents the state space, and $\mathcal{A}$ denotes the action space. The state transition probability function $P(s' | s, a)$ defines the likelihood of transitioning to a new state $s' \in \mathcal{S}$ after taking an action $a \in \mathcal{A}$ in the current state $s \in \mathcal{S}$. The reward function $R(s, a) \in \mathbb{R}$ specifies the immediate reward received after taking action $a$ in state $s$. $\gamma \in [0, 1{\color{black})}$ is the discount factor, determining the importance of future rewards relative to immediate ones.

At each time step $t$, the agent observes the current state $s_t \in \mathcal{S}$, selects an action $a_t \in \mathcal{A}$, receives a reward $r_t = R(s_t, a_t)$, and transitions to the next state $s_{t+1}$ according to the transition function $P$. A complete running process is called a \textit{trajectory}. The goal of the agent is to learn a policy $\pi: \mathcal{S} \to \mathcal{A}$ that maximizes the expected cumulative discounted reward $G_t = \sum_{k=0}^{\infty} \gamma^k r_{t+k}$. In this paper, we focus on the \textit{state and state-pair occupancy}, which are the visitation frequency of states and state-pairs. Given policy $\pi$, the state occupancy is defined as $d_s^\pi(s) = (1-\gamma)\sum_{t=0}^{\infty}\gamma^t\text{Pr}(s_t=s)$  and the state-pair occupancy is given by $d_{ss}^\pi(s,s')=(1-\gamma)\sum_{t=0}^{\infty}\gamma^t\Pr(s_t=s,s_{t+1}=s')$.

\textbf{Wasserstein Distance.} Wasserstein distance, also known as Earth Mover's Distance (EMD)~\citep{kantorovich1939mathematical}, is widely used to measure the distance between two probability distributions. For the metric space $(M,c)$ where $M$ is a set and $c:M\times M\rightarrow \mathbb{R}$ is a metric, the 1-Wasserstein distance\footnote{Unless otherwise specified, we will discuss 1-Wasserstein distance in this paper.} between two distributions $p(x)$ and $q(x)$ on the metric space $(M,c)$ is defined as:
\begin{equation}
\label{eq:primal_wass}
    \mathcal{W}_1(p, q) = \inf_{\Pi(p, q)} \int_{M \times M} c(x, y) \, d\Pi(x, y).
\end{equation} 
Intuitively, this equation quantifies the optimal way to ``move'' mass from $p$ to $q$ while minimizing the total movement, as described by the joint distributions $\Pi(p,q)$ with marginals $p$ and $q$. A more popular form adopted by the machine learning community is the Kantorovich-Rubinstein (KR) dual~\citep{KRduality} of the 1-Wasserstein distance, which reads as follows: 
\begin{equation}
    \mathcal{W}_1(p, q) = \sup_{\|f\|_L \leq 1} \left( \mathbb{E}_{x \sim p} [f(x)] - \mathbb{E}_{x \sim q} [f(x)] \right).
\end{equation}
Here, $\|f\|_L\leq 1$ restricts function $f$ to be $1$-Lipschitz, i.e., for any $x, x'$, $\frac{|f(x)-f(x')|}{c(x, x')}\leq 1$. As the most prominent way to compel Lipschitzness is regularization of the gradient~\citep{WGANGP, WGANwork} (i.e., $\nabla f(x_0)=\frac{f(x)-f(x_0)}{\|x-x_0\|_2}$ for local $x$), the 1-Lipschitz constraint inherently limits the distance metric $c$ to be Euclidean~\citep{WGANwork}, which is often undesirable~\citep{yan2023offline}. In this paper, we fix this issue by introducing an ICVF-learned distance metric.

{\color{black}
\textbf{Intention Conditioned Value Function (ICVF).} ICVF~\citep{ICVF} is an embedding algorithm which learns an augmented value function $V(s,s_+,z)$. It is defined as the unnormalized state occupancy of the future state $s_+\in\mathcal{S}$ starting from state $s\in\mathcal{S}$ with policy $\pi_z$ that is optimal for reaching some \textit{intention} (i.e., goal state) $z\in\mathcal{S}$. Formally,
\begin{equation}
    V(s, s_+, z) = \mathbb{E}_{\pi_z} \left[ \sum_{t=0}^{\infty} \gamma^{t}p(s_t = s_+|s_0 = s) \right]= \mathbb{E}_{s_{t+1} \sim P_z(\cdot|s_t)} \left[ \sum_{t=0}^{\infty} \gamma^t\mathbb{I}(s_t = s_+) \mid s_0 = s \right],
\end{equation}
where $P_z(s_{t+1}|s)$ is the transition probability from $s_t$ to $s_{t+1}$ when acting according to $\pi_z$, which can be seen as a pseudo-policy that takes the next state as ``pseudo-action''. $\mathbb{I}(s_t=s_+)$ serves a set of pseudo-reward labels: its value is $1$ if the condition is true, and $0$ otherwise.  Thus, $V$ with all different $\pi_z, P_z$ and $s_+$ can be jointly learned from a small, random state-only dataset with offline RL, such as IQL~\citep{IQL}.

With such a value function learned, the next step is to extract the dynamic-aware embedding. To easily do so, the value function is structured as follows:
\begin{equation}
    V_\theta(s, s_+, z) = \phi_\theta(s)^TT_\theta(z)\psi_\theta(s_+).
    \label{eq:valuefuntion}
\end{equation}
Here, $\phi_\theta(s)\in\mathbb{R}^d$ is the \textit{state representation} that maps a state into a latent space, $T_\theta(z)\in\mathbb{R}^{d\times d}$ is the matrix of \textit{intention}, and $\psi_\theta(s_+)\in\mathbb{R}^d$ is the \textit{outcome representation} (see Appendix~\ref{sec:app_icvf} for details). As we later prove in Sec.~\ref{sec:latentspace}, under certain assumptions, the state-pair occupancy of the expert policy is approximately a linear combination of $\phi_\theta(s)$, which justifies our choice of using the ICVF embedding. This design is also empirically validated in Sec.~\ref{sec:latentspace} and Sec.~\ref{sec:experiments}. 

}
\section{Latent Wasserstein Adversarial Imitation Learning (LWAIL)}

This section is organized as follows: In Sec.~\ref{sec:wasom} we first define our goal and frame it using a Wasserstein adversarial state occupancy matching objective with KR duality. We then point out its inherent shortcomings and propose the ICVF-trained latent space metric as a solution in Sec.~\ref{sec:latentspace}. Finally, we introduce our algorithm in Sec.~\ref{sec:pipeline}. See Fig.~\ref{fig:teaser} for an overview of our work.

\subsection{Wasserstein Adversarial State Occupancy Matching}
\label{sec:wasom}

Our goal is to learn a policy $\pi$ using three sources of information: a few-shot, state-only expert dataset $E$, another {\color{black}small} dataset $I$ with state-only \textit{random} transitions $(s, s')$ (either given or collected with an untrained policy), and online interactions. Inspired by recent state occupancy matching works~\citep{ValueDICE,IQlearn,smodice,lobsdice}, here, we minimize the 1-Wasserstein distance between state-pair occupancy distributions of the policy $\pi$, i.e., $d_{ss}^\pi(s, s')$, and of the empirical policy of the expert, i.e., $d_{ss}^E(s, s')$. Formally, we address
\begin{equation}
\label{eq:primal}
\begin{aligned}
&\min_{\pi} \mathcal{W}_1(d_{ss}^\pi(s, s'), d_{ss}^E(s, s')),
\end{aligned}
\end{equation}
where $s$ and $s'$ are adjacent states in a trajectory. Note, while many occupancy matching works, such as SMODICE~\citep{smodice} and LobsDICE~\citep{lobsdice}, use $f$-divergences, we opt to use the 1-Wasserstein distance because it provides a smoother measure and leverages the underlying geometric property of the state space, unlike $f$-divergences.

However, the Wasserstein distance itself is hard to compute as it is inherently a constrained linear programming problem (see  Eq.~\eqref{eq:primal_wass}), which is difficult to solve via gradient descent. While there exist workarounds such as convex regularizers~\citep{yan2023offline}, surrogates~\citep{PWIL}, and direct matching of trajectories~\citep{luo2023otr, AILOT}, here, we choose the widely adopted KR dual~\citep{KRduality} as our objective. Combined with policy optimization, this results in the final objective 

\begin{equation}
   \min_{\pi}\max_{\|f\|_L \leq 1} \left( \mathbb{E}_{(s,s')\sim d^\pi_{ss}} [f(s,s')] - \mathbb{E}_{(s,s')\sim d^E_{ss}} [f(s,s')] \right),
   \label{eq:Wdis}
\end{equation}

where the 1-Lipschitz constraint can be encouraged by prominent methods such as gradient regularization~\citep{WGANGP}. With constraint `addressed,' Eq.~\eqref{eq:Wdis} is a bi-level optimization and can be optimized iteratively by any RL algorithm. Specifically, since $d^E_{ss}$ is independent of $\pi$, the objective for finding policy $\pi$ is $\max_{\pi}\mathbb{E}_{(s,s')\sim d^\pi_{ss}} [-f(s,s')]$. This can be optimized with any RL algorithm using reward $r(s,a)=\mathbb{E}_{s'\sim P(s'|s,a)}[-f(s,s')]$. From an adversarial imitation learning perspective, $f(s,s')$ can be interpreted as a discriminator that outputs a high score $f(s,s')$ for expert state pairs and a low score $f(s,s')$ for non-expert ones.

\subsection{Dynamics-Aware Distance Metric with ICVF Embedding Space}
\label{sec:latentspace}

While the objective in Eq.~\eqref{eq:Wdis} already provides a viable solution, it has a subtle limitation: As mentioned in Sec.~\ref{sec:prelim}, the metric $c(s,s')$ is limited to be Euclidean in practice due to the Lipschitz constraint $\frac{\|f(s)-f(s')\|}{c(s, s')}\leq 1$. However, as illustrated in Fig.~\ref{fig:teaser}, a Euclidean distance in the raw state space often fails to capture the true relation between states.  
This subtle reliance on the Euclidean distance is often ignored by prior KR duality-based imitation learning methods, e.g., IQlearn~\citep{IQlearn} and WDAIL~\citep{WDAIL}, but crucial for performance~\citep{yan2024simple}.

To address this issue, we aim to learn a latent state representation,
in which the Euclidean distance serves as a more effective metric that permits capturing the environment's dynamics and relationship between states from a \textit{small amount of randomly collected, unlabeled data, even without access to ground-truth actions and rewards}. To achieve this goal, we benefit from the ICVF~\citep{ICVF} framework. We found that using as the cost $c(s,s')=\|\phi_\theta(s)-\phi_\theta(s')\|_2$ the root mean squared difference between state representations in latent space, i.e., $\phi_\theta(s)$, is not only empirically effective as validated in Sec.~\ref{sec:experiments}, but it also theoretically benefits learning. More specifically, we have:

\begin{theorem}
\label{thm:main}
In a near-deterministic MDP with $\gamma<1$, For any converged policy $\pi_z$ learned in ICVF with goal $z$, there exists a vector $\eta$ such that, for any adjacent $(s,s')\in I$, $d^{\pi_z}_{ss}(s,s')\approx \eta^T\phi_\theta(s)$, i.e., the state-pair occupancy is approximately a linear combination of $\phi$.  
\end{theorem}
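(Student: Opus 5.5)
The plan is to read the statement through the factorization in Eq.~\eqref{eq:valuefuntion}. The key observation is that, in a (near-)deterministic MDP, the state-pair occupancy $d^{\pi_z}_{ss}(s,s')$ factors into the state occupancy of $s$ times the transition probability into $s'$. The state occupancy can in turn be written through the ICVF value $V(s_0,s,z)$ for an appropriate start distribution. Concretely, I would write
\begin{equation*}
d^{\pi_z}_{ss}(s,s') = d^{\pi_z}_s(s)\,P_z(s'|s), \qquad d^{\pi_z}_s(s) = (1-\gamma)\,\mathbb{E}_{s_0\sim\mu_0}\big[V(s_0,s,z)\big].
\end{equation*}
Here $\mu_0$ is the initial-state distribution, and I use the definition of $V$ as the unnormalized discounted occupancy of $s_+$ under $\pi_z$.

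The difficulty is that the statement asks for a linear function of $\phi_\theta(s)$, the \emph{first} argument representation, not of $\psi_\theta(s_+)$. I therefore plan to express the occupancy at the pair $(s,s')$ via the value starting \emph{from} $s$. Using the Bellman recursion that ICVF is trained with (via IQL), at convergence we have
\begin{equation*}
V(s,s_+,z) = \mathbb{I}(s=s_+) + \gamma\,\mathbb{E}_{s'\sim P_z(\cdot|s)}\big[V(s',s_+,z)\big].
\end{equation*}
In a near-deterministic MDP the policy $\pi_z$ is near-deterministic as well, so for adjacent $(s,s')\in I$ that $\pi_z$ actually takes, $P_z(s'|s)\approx 1$. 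Otherwise it is $\approx 0$, and $s'$ is determined by $s$, so the pair is effectively a function of $s$ alone.

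Next I would choose the target state as $s_+=s'$ and, for $s\neq s'$, derive $V(s,s',z)\approx\gamma V(s',s',z)$. Under a deterministic $\pi_z$ that never revisits states, $V(s',s',z)\approx 1$, which gives
\begin{equation*}
P_z(s'|s)\approx \gamma^{-1}V(s,s',z)=\gamma^{-1}\phi_\theta(s)^T T_\theta(z)\psi_\theta(s').
\end{equation*}
The factor $d^{\pi_z}_s(s)$ must also be handled. In the deterministic setting, $d_s^{\pi_z}(s)$ is a function of $s$ only, and I would absorb it by also writing it as an ICVF value, using the symmetry of the bilinear form in the finite-data regime, or more simply by noting that along the trajectory of $\pi_z$ the quantity $d_s(s)\psi_\theta(s')$ is determined by $s$. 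The cleanest formal choice is then to define
\begin{equation*}
\eta := \gamma^{-1}T_\theta(z)\,\bar\psi, \qquad \bar\psi := \mathbb{E}_{(s,s')}\big[d^{\pi_z}_s(s)\,\psi_\theta(s')\big].
\end{equation*}
Here $\bar\psi$ is the occupancy-weighted outcome representation; I would argue it is approximately constant in near-deterministic, slowly varying regimes, which is where the approximation sign enters.

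The main obstacle is exactly this last step: making $\eta$ independent of $(s,s')$ when both $d_s(s)$ and $\psi_\theta(s')$ genuinely vary. I would first try to push the $s$-dependence entirely into $\phi_\theta(s)$ by using that, in the deterministic case, $s'$ and $d_s(s)$ are functions of $s$. The ICVF representation is assumed expressive enough, which the converged-policy hypothesis supplies, so that any function of $s$ reachable through $V(\cdot,s_+,z)$ lies in the span of $\phi_\theta$. That is precisely the linear-span property ICVF is designed to give, namely that value functions of intention-optimal policies are linear in $\phi$. Under this property, $s\mapsto d_{ss}^{\pi_z}(s,s'(s))$ is itself a value-like function for a reward concentrated on transitions, hence $\approx\eta^T\phi_\theta(s)$. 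The approximation errors would be collected from near-determinism, from function-approximation error, and from finite data $I$.
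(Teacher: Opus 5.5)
Up to $P_z(s'|s)\approx\gamma^{-1}V(s,s',z)$ your argument is sound, and it is the paper's Case 1 in another form. The paper's proof writes $d^{\pi}_s(s)\,V^{\pi}(s,s_+,z)$ as $d^{\pi}_{ss}(s,s_+)$ plus a remainder collecting visits to $s_+$ at lags $i\ge 2$. It then argues the remainder is negligible when $\pi_z$ steps directly from $s$ to $s_+$. That is exactly your Markov factorization plus the no-revisit evaluation; your $\gamma^{-1}$ only reflects the $t=0$ convention for $V$, since the paper's proof starts its sum at $i=1$. Like the paper's argument, this relation only covers pairs that $\pi_z$ actually takes. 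You also omit the paper's Case 2: if $s'$ lies on a cycle of length $T$ that $\pi_z$ traverses after reaching $z$, then $V(s',s',z)\approx 1/(1-\gamma^T)$ rather than $1$, and the coefficient picks up a factor $1-\gamma^T$.

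The genuine gap is the last step, where you try to make $\eta$ independent of $(s,s')$. The paper never does this. It sets $\eta=d^{\pi}_s(s)\,T_\theta(z)\psi_\theta(s_+)$ (times $1-\gamma^T$ in Case 2), which depends on the pair. In the paper's reading you were therefore already finished at $\eta=\gamma^{-1}d^{\pi_z}_s(s)\,T_\theta(z)\psi_\theta(s')$.

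Your route to a uniform $\eta$ does not work, for three reasons.
\begin{itemize}
\item Defining $\bar\psi$ as an expectation makes it constant by fiat. What you actually need is $d^{\pi_z}_s(s)\psi_\theta(s')\approx\bar\psi$ for every pair, and no hypothesis gives this: $d^{\pi_z}_s$ vanishes off the path of $\pi_z$, and $\psi_\theta$ is built to separate outcomes.
\item The span argument swaps the roles of $\phi_\theta$ and $\psi_\theta$. ICVF linearity says that for fixed $z$ and a \emph{fixed} state reward $r$, the value as a function of the start state is $\phi_\theta(s)^T T_\theta(z)\sum_{s_+}r(s_+)\psi_\theta(s_+)$. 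But $s\mapsto d^{\pi_z}_s(s)$ is a function of the visited state. By your own first display it equals $(1-\gamma)\mathbb{E}_{s_0\sim\mu_0}[\phi_\theta(s_0)]^T T_\theta(z)\psi_\theta(s)$, which is linear in $\psi_\theta(s)$, not $\phi_\theta(s)$. A reward concentrated on $(s,s'(s))$ moves with $s$ and so yields no fixed coefficient.
\item Convergence of $\pi_z$ says nothing about the expressiveness of $\phi_\theta$.
\end{itemize}

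In fact a uniform $\eta$ is impossible in general. Suppose $I$ contains $(s,s_1')$ and $(s,s_2')$ for a state $s$ that $\pi_z$ visits and leaves only to $s_1'$. Then $d^{\pi_z}_{ss}(s,s_1')=d^{\pi_z}_s(s)>0=d^{\pi_z}_{ss}(s,s_2')$, while $\eta^T\phi_\theta(s)$ is a single number. You have to accept a pair-dependent $\eta$, as the paper does.
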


We defer the proof to Appendix~\ref{sec:math}. Intuitively, the idea is that the value function $V(s,s',z)$, our optimization target $d^\pi_{ss}(s,s')$, and the expert state-pair occupancy $d^E_{ss}(s,s')$, all model the probability of visiting $s'$ after $s$ with policy $\pi_z$. As they share the same underlying linear structure of $\phi_\theta$, which captures the environment's dynamics, learning in the space of $\phi_{\theta}$ can reduce the learning difficulty by aligning via ICVF the computation structure of Wasserstein optimization and transition dynamics. The benefits of an alignment have also been observed in other contexts~\citep{xu2019can}.

To further show the effectiveness of the distance metric $c(s,s')=\|\phi_\theta(s)-\phi_\theta(s')\|_2$, we provide a t-SNE~\citep{van2008visualizing} visualization of the same trajectory in both the raw state space and the latent space in Fig.~\ref{fig:tsne} using environments we evaluated in Sec.~\ref{sec:mujoco}. The result shows that the latent space better captures the dynamic relationship between states. This finding highlights that the Euclidean distance in the latent space is a more suitable metric for Wasserstein distance-based state matching. We also provide ablations in Sec.~\ref{sec:ablation} and 
Appendix~\ref{sec:moreablations}.

\begin{figure}[t]
    \centering
    \subfigure[HalfCheetah]{
    \includegraphics[height=3cm]{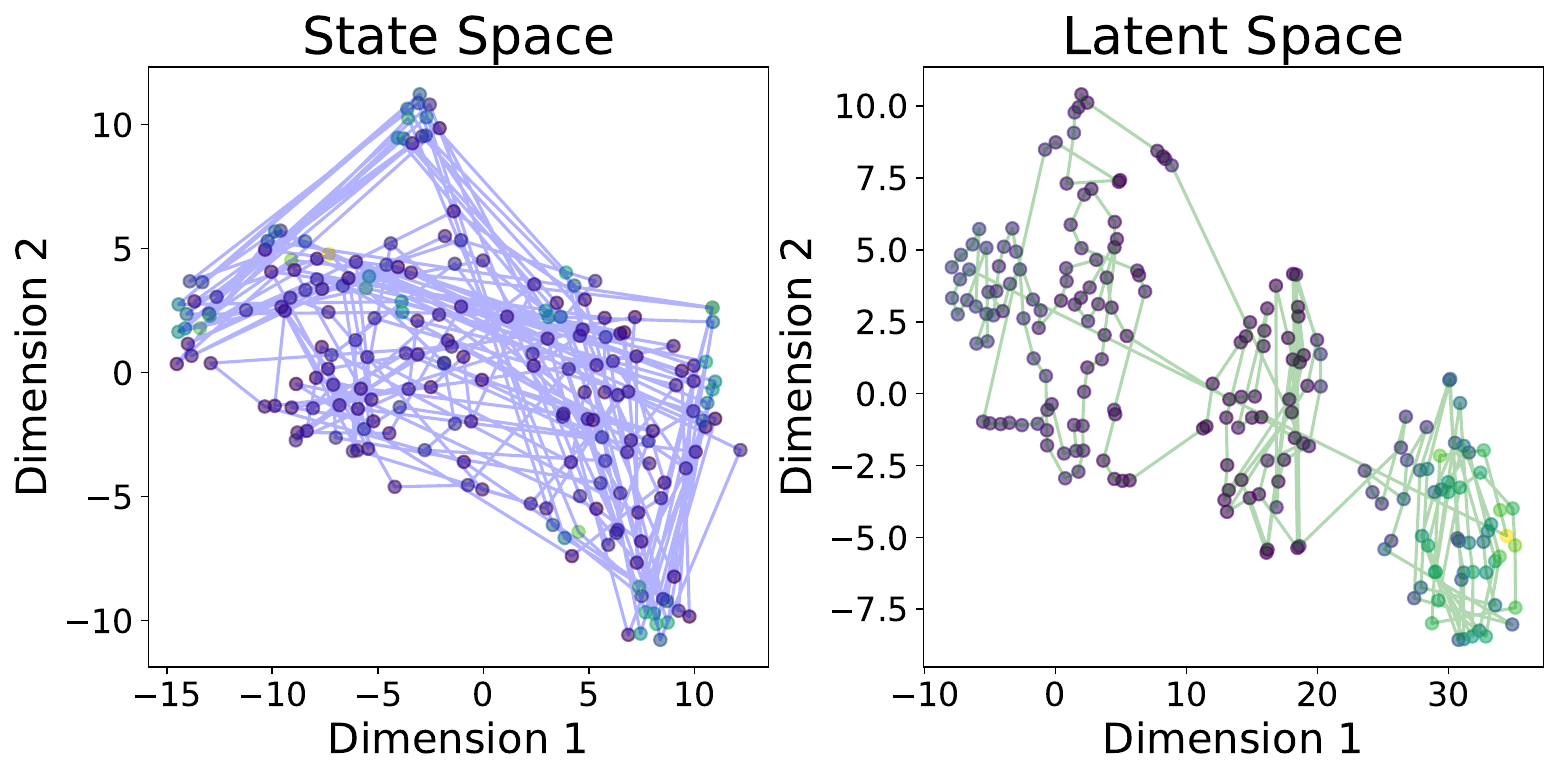}}
    \subfigure[Walker2d]{
    \includegraphics[height=3cm]{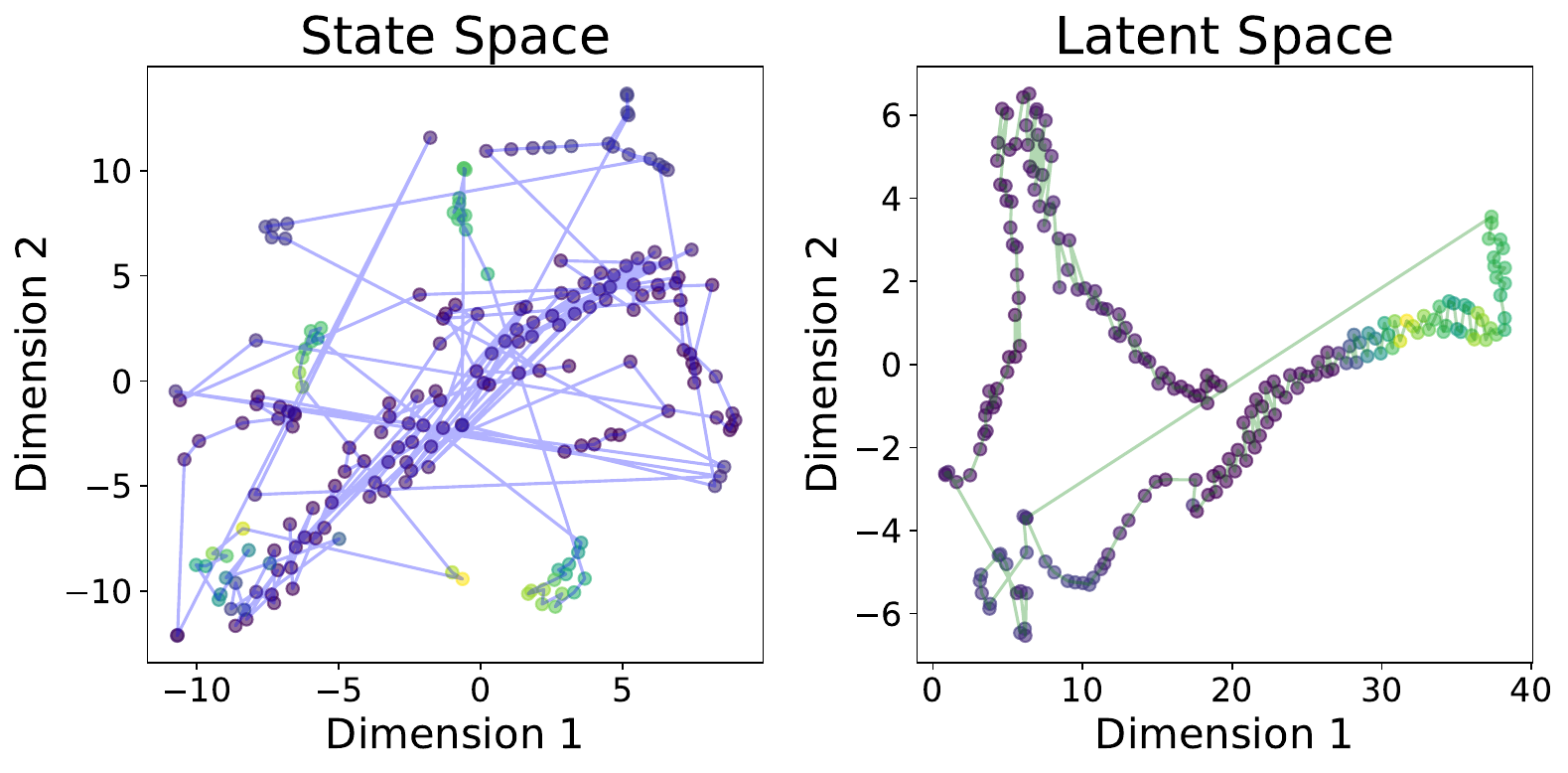}}
    \caption{t-SNE visualization of the same trajectory in the original state space and the embedding latent space. The color of the points represents the ground-truth reward of the state (greener is higher). States connected by lines are adjacent in the trajectory. We observe that an ICVF-trained embedding provides a much more dynamics-aware metric than the Euclidean distance. For Hopper and Ant environment embeddings, see Appendix~\ref{sec:morevis}. 
    }
    \label{fig:tsne}
\end{figure}

\subsection{LWAIL}
\label{sec:pipeline}
We now introduce our proposed method, LWAIL, which consists of two stages: pre-training and imitation. We will also provide pseudo-code to summarize our approach. 

\textbf{Pre-training.} The pre-training stage consists of three steps. First, collect random transition data from the environment by using a randomly initialized policy. This can be skipped if a random dataset is available, e.g., in a setting following~\citet{yue2024ollie}. Second, train the value function via the loss given in  Eq.~\eqref{eq:icvftrain} using IQL, 
and retrieve the projection function $\phi$ from the value function formulated in Eq.~\eqref{eq:valuefuntion}. Third, train $f(\cdot,\cdot)$ using the following objective with frozen latent variable mapping $\phi$ and frozen, untrained, random policy $\pi$:
\begin{equation}
\begin{aligned}
    \max_{\|f\|_L\leq 1} \left( \mathbb{E}_{(s, s')\sim d^\pi_{ss}}[f(\phi(s),\phi(s'))] -\mathbb{E}_{(s,s')\sim d^E_{ss}}[f(\phi(s),\phi(s'))] \right).
    \label{eq:pretrainF}
\end{aligned}
\end{equation}
Here, $f$ serves as the discriminator (from an adversarial learning perspective), the reward function for the policy in later imitation (from an RL perspective), and the KR dual function (from a Wasserstein perspective). To encourage the Lipschitz constraint, $f$ is trained with a gradient penalty, following WGAN-GP~\citep{WGANGP}.

\textbf{Imitation.} In the online imitation learning stage, we again freeze the learned embedding $\phi$ and replace $s$ and $s'$ with their latent space representations, $\phi(s)$ and $\phi(s')$. Then the imitation learning problem in Eq.~\eqref{eq:Wdis} can be addressed via 
\begin{equation}
\begin{aligned}
    \min_\pi\max_{\|f\|_L\leq 1} \left( \mathbb{E}_{(s, s')\sim d^\pi_{ss}}[f(\phi(s),\phi(s'))] - \mathbb{E}_{(s,s')\sim d^E_{ss}}[f(\phi(s),\phi(s'))]\right).
    \label{eq:mainobjective}
\end{aligned}
\end{equation} 
Following the standard off-policy approach, the agent interacts with the environment to gather data and iteratively updates the value function and policy. Once the policy has collected a batch of trajectories, we update the 
discriminator network $f$ based on Eq.~\eqref{eq:mainobjective}. Following Sec.~\ref{sec:wasom}, using an adversarial learning framework, we then use $f$ to generate rewards for the downstream reinforcement learning algorithm, for which we employ TD3~\citep{TD3}, a robust, off-policy reinforcement learning method selected due to its stability and effectiveness. Slightly different from Sec.~\ref{sec:wasom} however, the reward for the TD3 policy $\pi$ is defined as {$r(s, s') = \sigma(-f(\phi(s), \phi(s'))$}. $\sigma$ is the sigmoid function that normalizes the reward to the range $[0, 1]$, stabilizing the downstream RL algorithm. {$-f(\phi(s),\phi(s'))$} is a $1$-sample estimation of {$\mathbb{E}_{s'\sim P(s'|s,a)}\left[-f(\phi(s),\phi(s'))\right]$} for transition $(s,a,s')$, following Kim et al.~\citep{lobsdice}. $f$ and $\pi$ are then iteratively updated until the policy $\pi$ is properly trained. The entire procedure of our method is summarized in Alg.~\ref{alg:main}.

\begin{algorithm}[t]
\caption{LWAIL}
\begin{algorithmic}[1]
\label{alg:main}
\REQUIRE State-only expert dataset $E$, state-action random dataset $I$ (optional), initial policy $\pi$, discriminator $f$, replay buffer $\mathcal{B}$, update frequency $m$

\textbf{Pretrain:}
\STATE Collect transitions into buffer $\mathcal{B}$ with random actions or use random dataset provided
\STATE Use ICVF to pre-train the representation network $\phi$ (Eq.~\eqref{eq:icvftrain})
\STATE Pre-train $f$ with initial policy $\pi$ (inner level of Eq.~\eqref{eq:mainobjective})
\STATE \textbf{Imitation:}

\WHILE{ $ t \leq T$}
    \STATE Collect transitions ${(s, a, s', \text{done})}$ using $\pi$
    \STATE Get pseudo-reward $r_p=\sigma(-f(\phi(s),\phi(s')))$
    \STATE Add ${(s,a,s',r_p, \text{done})}$ to replay buffer $\mathcal{B}$
    \IF{ $t \mod m == 0$ }
        \STATE Update $f$ (inner level of Eq.~\eqref{eq:mainobjective})
    \ENDIF
    \STATE Sample mini-batch of $N$ transitions from $\mathcal{B}$ to perform TD3 update
\ENDWHILE

\end{algorithmic}
\end{algorithm}
\section{Experiments}
\label{sec:experiments}
In this section, we assess the efficacy of LWAIL across multiple benchmark tasks. Specifically, we study the following questions: 1) How is our assigned reward  $f(\phi(s),\phi(s'))$ different from the ground-truth reward? 2) Can our algorithm work well on complicated continuous control environments? 3) How much do the ICVF embedding LWAIL contribute to its performance, and is LWAIL robust to environment noise? We will answer 1) in Sec.~\ref{sec:toy}, 2) in Sec.~\ref{sec:mujoco}, and 3) in Sec.~\ref{sec:ablation}.

\subsection{Simple environment on Maze2D}
\label{sec:toy}

We first evaluate LWAIL in the Umaze environment, which offers intuitive visualizations of how LWAIL effectively learns reward representations for downstream tasks. 

\textbf{Experimental and Dataset Setup.} We use the maze2d-umaze-v0 environment from D4RL~\citep{d4rl}, where a 2D point mass moves from a random start to a specific target at coordinates (1,1). Observations consist of $(x,y)$ positions and velocities; actions are linear forces in $x,y$. The sparse reward is $1$ when the goal is reached (distance $<0.5$m).
The ICVF model is trained on the D4RL random dataset, followed by Wasserstein learning with Euclidean and latent-space distances. 

\textbf{Results.} After convergence, the reward map is compared in Fig.~\ref{fig:reward} (b) and (c). The state used to calculate rewards is the position with zero velocity. The environment walls are shown in orange. These results indicate that the ICVF-learned metric captures trajectory dynamics, enhancing reward feedback during online inverse RL exploration. The reward curve in Fig.~\ref{fig:reward} (d) further shows that our method converges effectively on Maze2d, surpassing TD3 with ground-truth sparse rewards.

\begin{figure*}[t] 
    \centering 
    \begin{minipage}[c]{0.21\linewidth} 
    \subfigure[Umaze]{\includegraphics[height=2.4cm]{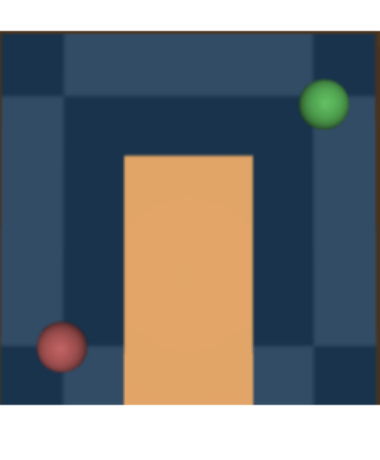}} 
    \end{minipage} 
    \begin{minipage}[c]{0.25\linewidth} 
    \subfigure[Normal Reward]{\includegraphics[height=2.5cm, width = 3.25cm]{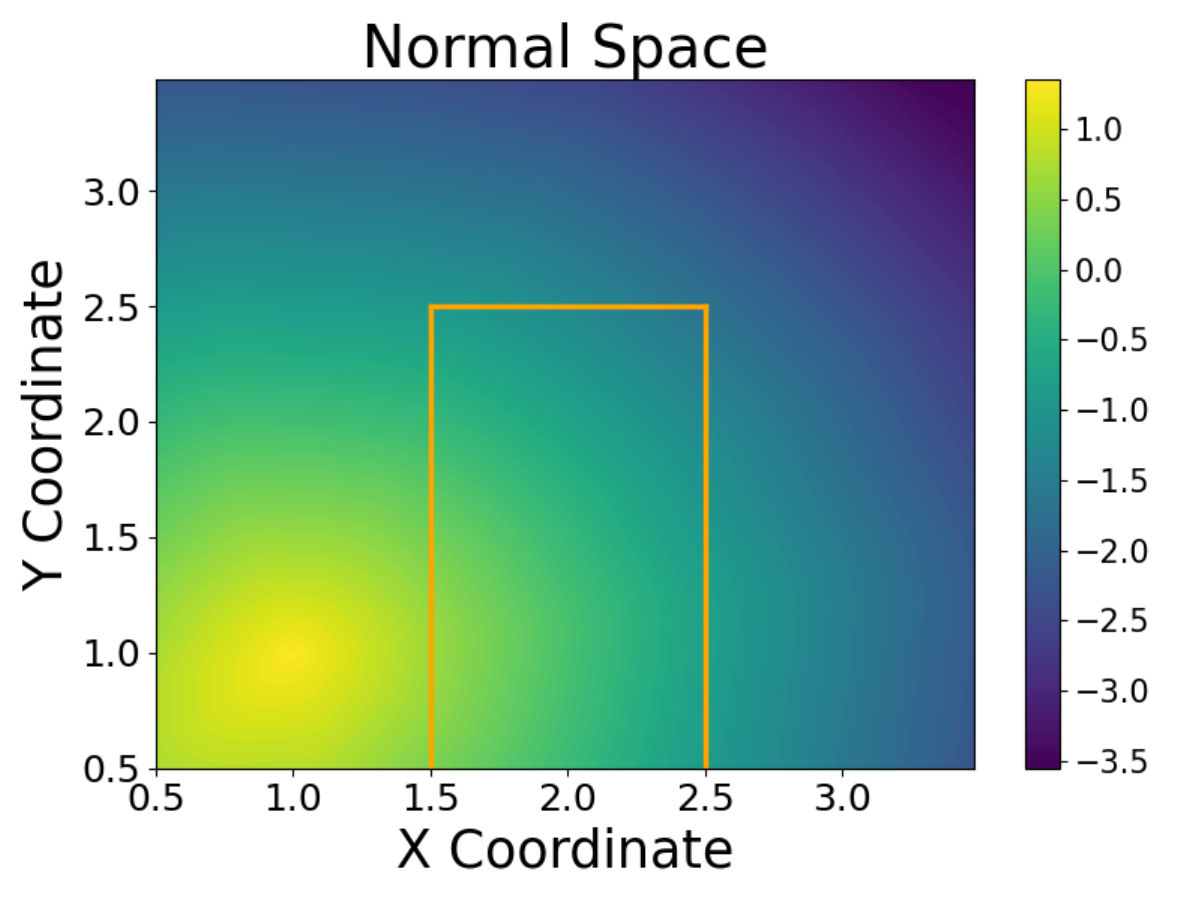}} 
    \end{minipage} 
    \begin{minipage}[c]{0.25\linewidth} 
    \subfigure[Learned Reward]{\includegraphics[height=2.5cm, width = 3.25cm]{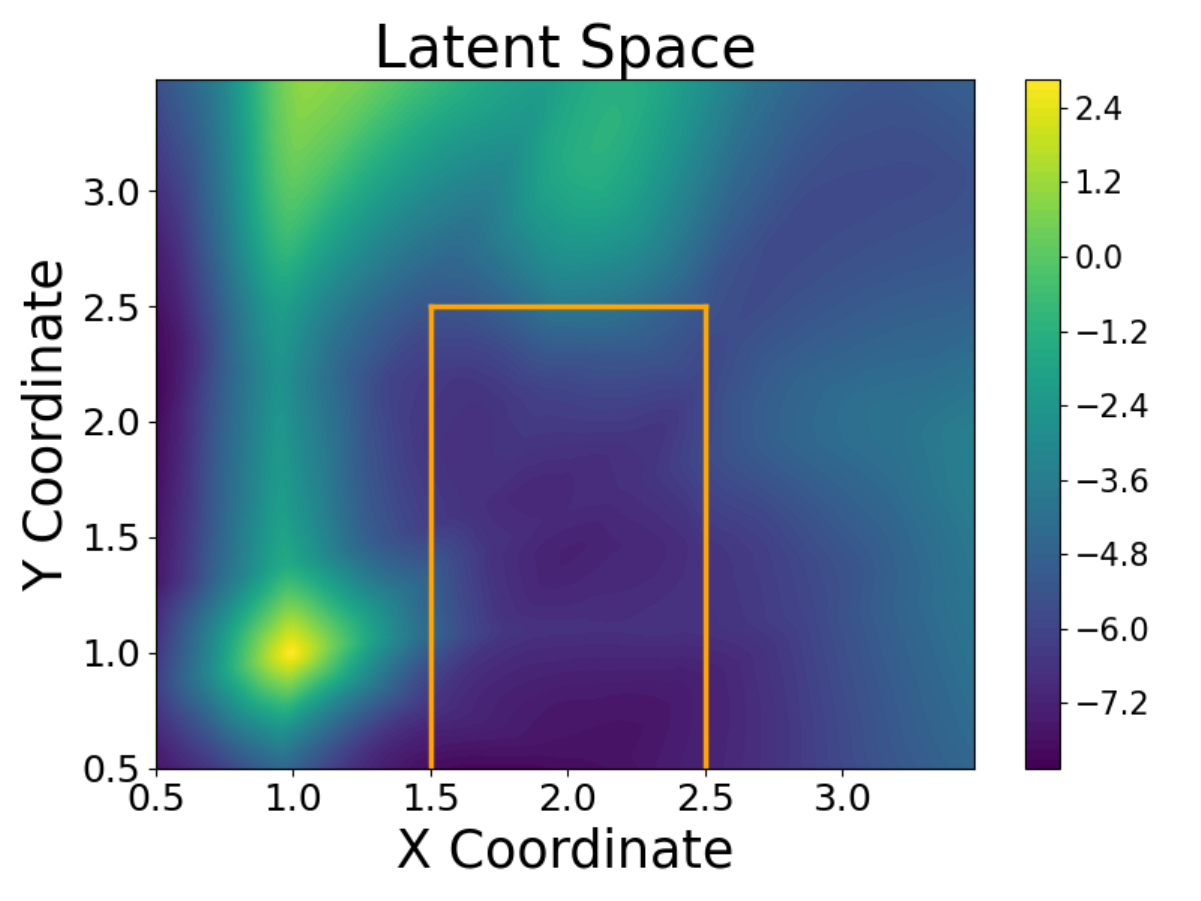}} 
    \end{minipage}
    \begin{minipage}[c]{0.25\linewidth} 
    \subfigure[Maze Performance]{\includegraphics[height=2.5cm]{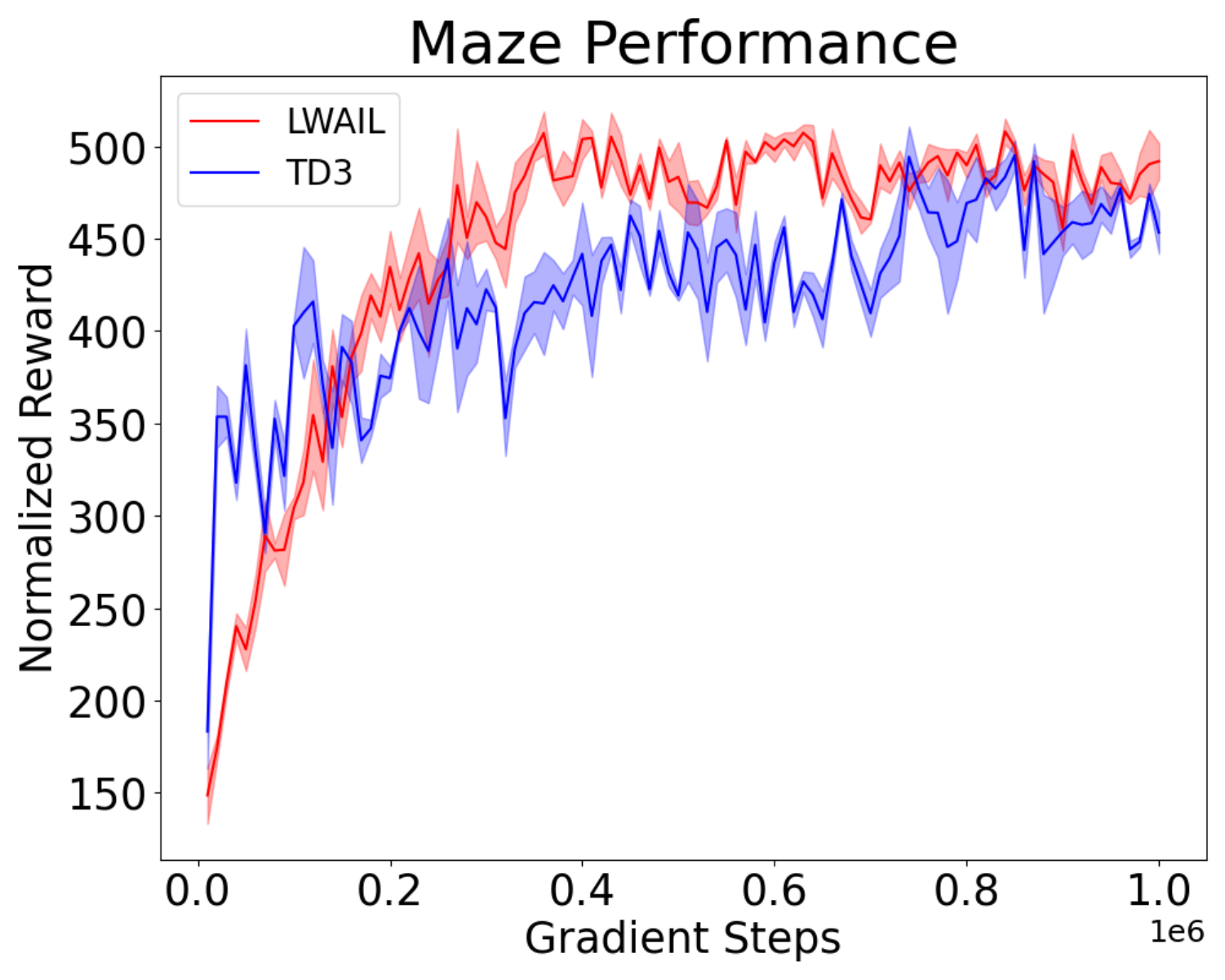}} 
    \end{minipage}
    \caption{
    Results on Maze2D. Panel (a) shows the environment setup with the start (green) and the goal (red). Panel (b) and (c) illustrate the reward distribution without and with ICVF embedding (orange represents the wall). Panel (d) shows the learning curves: LWAIL vs.\ TD3 with ground-truth rewards. We note that the reward is more dynamics-aware with LWAIL.}
    \label{fig:reward}
\end{figure*}

\subsection{MuJoCo Environment}
\label{sec:mujoco}
\textbf{Baselines.}
We test a variety of baselines in this section, which can be categorized into four types: 1) \textit{classic imitation methods}, including GAIL~\citep{GAIL}, AIRL~\citep{AIRL} and the plain Behavior Cloning (BC); 2) \textit{Wasserstein-based imitation methods}, including PWIL~\citep{PWIL}, WDAIL~\citep{WDAIL} and IQlearn~\citep{IQlearn}; 3) \textit{LfO methods}, including BCO~\citep{BCO}, GAIfO~\citep{GAIFO}, {\color{black}DIFO~\citep{DIFO}, LS-IQ~\citep{alhafez2023}, DiffAIL~\citep{wang2023diffail}}, OPOLO~\citep{OPOLO} and DACfO (LfO variant of DAC~\citep{kostrikov2018discriminator}; serves as a baseline for OPOLO~\citep{OPOLO}); 4) \textit{offline to online imitation learning}, which includes OLLIE~\citep{yue2024ollie}.\footnote{We are unable to run the github version of OLLIE due to non-trivial typos in their code, and we are not able to reproduce LS-IQ results, likely due to a version mismatch for unspecified packages. For both methods, we directly report the final numbers with $1$ expert trajectory from their paper instead.} Some methods, such as GAIL and WDAIL require expert action. For these methods, we report the results with extra access to the expert actions. We report the mean and standard deviation from 5 independent runs with different seeds. The performance is measured by the D4RL-defined normalized reward (higher is better).

\textbf{Experimental and Dataset Setup.} We evaluate our method on four standard MuJoCo~\citep{todorov2012MuJoCo} environments: Hopper, HalfCheetah, Walker2D, and Ant. The expert data for both our method and the baselines consists of a single trajectory from the D4RL expert dataset. The corresponding normalized expert scores are 113.23, 88.42, 106.84, and 116.97, respectively. For ICVF pre-training, we use 10K state pairs collected from random rollouts, which is merely $1\%$ of our online data. All baselines, including ours, are trained with a single state-only trajectory as expert data. We report the normalized average reward over 10 evaluation trajectories, providing both the mean and standard deviation (higher values indicate better performance). Each method is trained with 1M online samples. Additional hyperparameters are provided in Appendix~\ref{sec:app_hyp}.

\textbf{Results.}
Fig.~\ref{fig:main_result} shows the reward curves of each method on MuJoCo environments, while Tab.~\ref{tab:result} summarizes the final results. Both the figure and the table show that our method achieves compelling results and convergence compared to the baselines across tasks. BCO, DACFO, OPOLO, and PWIL perform well in some environments, while other methods struggle.
\begin{figure*}[t]
    \centering
    \includegraphics[width=0.9\linewidth]{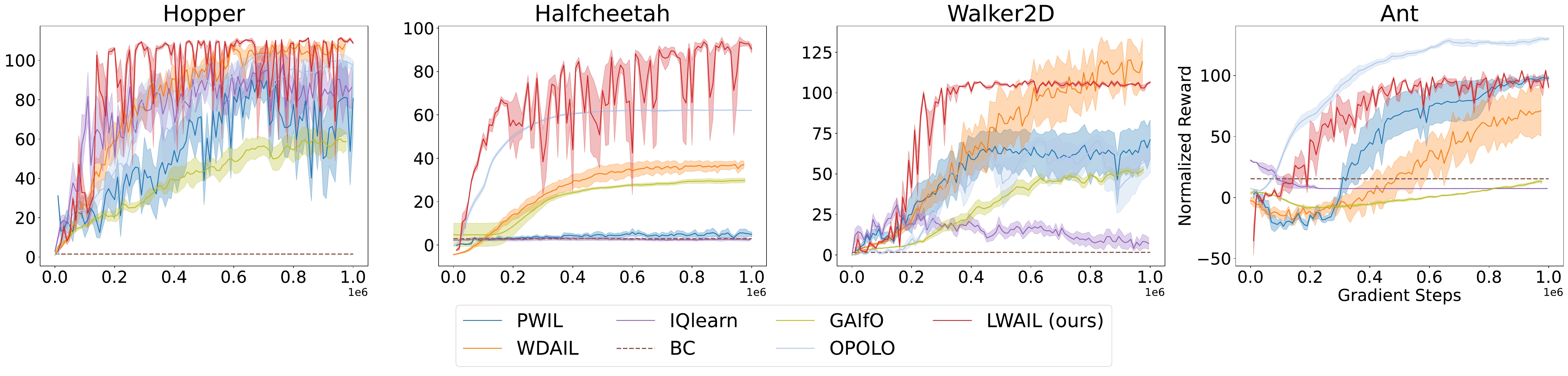}
    \caption{\color{black}Performance on the MuJoCo environments. 
    Overall, our method consistently delivers strong performance across all tasks. For LWAIL, the number of gradient steps matches the number of online interaction samples. For clarity, we present training curves for main baselines here; please refer to Fig.~\ref{fig:main_result_full} for the complete version.}
    \label{fig:main_result}
\end{figure*}
\begin{table*}[t]
\centering
\caption{\color{black}Performance comparison on MuJoCo environments. The first part of the table lists methods that require extra access to expert actions; the second part is for LfO methods. `*` represents results reported from the original paper, as we are unable to reproduce the results on our machine. Results are averaged over 50 trajectories. It is apparent that our method outperforms most baselines, even those with access to expert actions. DIFO is the most competitive expert state-only baseline.}
\small
\begin{tabular}{lccccc}
\toprule
 & \textbf{Hopper} & \textbf{HalfCheetah} & \textbf{Walker2D} & \textbf{Ant} & \textbf{Average} \\
\midrule
{\text{BC}} & {1.51 $\pm$ 0.61} & {1.92 $\pm$ 0.80}  & {2.06 $\pm$ 0.99}  & {12.74 $\pm$ 2.34}  &{4.56}\\
{\text{GAIL}} & {7.78 $\pm$ 2.13} & {-0.33 $\pm$ 0.60} & {2.14 $\pm$ 1.03} & {-1.98 $\pm$ 4.41} & {1.90}\\
{\text{AIRL}} & {1.16 $\pm$ 0.43} & {6.02 $\pm$ 3.59} & {0.83 $\pm$ 0.95} & {3.30 $\pm$ 11.39} &{2.83}\\
{\text{WDAIL}} & {107.72 $\pm$ 4.70} & {38.30 $\pm$ 1.09} & 126.07 $\pm$ 19.36 & {87.59 $\pm$ 13.12}&{89.92} \\

\text{DiffAIL} & 95.52 $\pm$ 9.94 & 33.36 $\pm$ 28.14 & \textbf{133.90 $\pm$ 1.41} & 80.56 $\pm$ 13.09 & 85.84 \\

{\color{gray}\text{OLLIE}} & {\color{gray}71.10 $\pm$ 3.5*}  & {\color{gray}35.50 $\pm$ 4.0*} & {\color{gray}59.80 $\pm$8.5*} & {\color{gray}57.10 $\pm$ 7.0*} &{\color{gray}55.87*}\\

\midrule

\text{PWIL} & 78.93 $\pm$ 39.00 & 20.81 $\pm$ 33.21 & 84.01 $\pm$ 26.53 & 105.62 $\pm$ 2.36 & 72.34\\

\text{IQlearn} & 86.24 $\pm$ 21.92 & 2.51 $\pm$ 1.05 & 7.07 $\pm$ 7.23 & 7.39 $\pm$ 0.13 &25.80\\

\text{BCO} & 21.31 $\pm$ 4.06 & 4.08 $\pm$ 1.72 & 0.88 $\pm$ 0.85 & 25.33 $\pm$ 0.87 &12.90\\

\text{DACfO} & 109.46 $\pm$ 0.39 & 61.52 $\pm$ 0.76 & 45.28 $\pm$ 37.26 & 113.40 $\pm$ 10.20&82.41 \\

\text{GAIfO} & 58.74 $\pm$ 9.07 & 29.79 $\pm$ 2.12 & 52.73 $\pm$ 4.16 & 12.99 $\pm$ 2.77 &38.56\\

\text{OPOLO} & 99.24 $\pm$ 5.49 & 58.98 $\pm$ 7.46 & 37.07 $\pm$ 12.67 & 129.46 $\pm$ 3.64 &81.19\\

\text{DIFO} & 98.50 $\pm$ 12.60 & 78.62 $\pm$ 18.68 & 99.43 $\pm$ 11.72 & 93.47 $\pm$ 8.91 & 92.51\\

 \text{\color{gray} LS-IQ} & {\color{gray}76.84 $\pm$ 4.41*} & {\color{gray}41.64 $\pm$ 3.64*} & {\color{gray}102.02 $\pm$ 2.54*} & {\color{gray}\textbf{132.28 $\pm$ 3.29}*} & {\color{gray}88.20*} \\

\text{LWAIL (ours)} & \textbf{108.84 $\pm$ 0.79} & \textbf{90.40 $\pm$ 3.71} & \textbf{106.51 $\pm$ 1.14} & 90.53 $\pm$ 3.26 &\textbf{99.07}\\
\bottomrule
\end{tabular}%

\label{tab:result}
\end{table*}

\subsection{Navigation Environment}
\label{sec:navigation}
To further evaluate our method in a setting similar to that discussed in Fig.~\ref{fig:teaser}, we test on the maze2d-medium, maze2d-large, and antmaze-umaze-v2 navigation tasks. For maze2d-medium and maze2d-large, to increase the difficulty and investigate the agent's generalization capabilities, we inject Gaussian noise into the agent's initial state, with the noise level being the standard deviation of the Gaussian distribution. 
This perturbation forces the agent to start from unseen observations that deviate from the expert demonstrations.

\textbf{Maze2d.} The quantitative results summarized in Tab.~\ref{tab:maze} demonstrate that LWAIL significantly outperforms the baseline with diverse initial state. As the noise level increases, LWAIL without ICVF suffers a catastrophic performance drop, failing to navigate successfully. In contrast, LWAIL with ICVF embedding maintains consistent high performance across all noise levels. This suggests that the ICVF embeddings in LWAIL successfully capture the environment's dynamics, allowing the agent to handle unseen observations and recover from unfamiliar states.
\begin{table}[ht]
\centering
\small 
\caption{Performance on maze2d-medium and maze2d-large with different levels of perturbed initial states. We report the normalized returns across different noise levels.}
\label{tab:maze}

\setlength{\tabcolsep}{10pt} 
\begin{tabular}{lccc}
\toprule
\textbf{Maze Type} & \textbf{Noise Level} & \textbf{LWAIL} & \textbf{No Embedding} \\
\midrule
Maze2d-medium
 & 0.0 & 144.59 $\pm$ 0.87 & 117.91 $\pm$ 0.30 \\
 & 0.2 & 143.42 $\pm$ 0.91 & -17.00 $\pm$ 0.01 \\
 & 0.5 & 135.12 $\pm$ 3.12 & -16.98 $\pm$ 0.02 \\
\midrule
Maze2d-large
 & 0.0 & 156.68 $\pm$ 0.60 & 148.25 $\pm$ 6.41 \\
 & 0.2 & 157.04 $\pm$ 0.71 & -11.11 $\pm$ 0.04 \\
 & 0.5 & 157.06 $\pm$ 0.81 & -11.13 $\pm$ 0.02 \\
\bottomrule
\end{tabular}
\end{table}

\textbf{Antmaze.} Tab.~\ref{tab:antmaze} shows the performance on antmaze, for which we use WDAIL, IQ-Learn, and SMODICE~\citep{smodice} as the baselines. The result shows that LWAIL with ICVF works comparably well with the baselines, and again illustrates the importance of ICVF embedding. Note, different from SMODICE, which uses different hyperparameters such as divergences (KL vs. $\chi^2$), our method uses the same hyperparameter for all environments.

\begin{table}[ht]
    \centering
    \small
    \caption{Performance on antmaze-umaze-v2.}
    \begin{tabular}{ccccc}
        \toprule
        LWAIL & No Embedding & WDAIL & IQ-Learn & SMODICE  \\
        \midrule
        34.76 $\pm$ 7.21 & 0.03 $\pm$ 0.01 & 31.03 $\pm$ 5.62 & 0.00 $\pm$ 0.00 & 44.44 $\pm$ 7.31 \\
        \bottomrule
    \end{tabular}
    
    \label{tab:antmaze}
\end{table}

{\color{black}
\subsection{Ablation Study}
\label{sec:ablation}

Due to the page limit, we only present part of the ablations. See Appendix~\ref{sec:moreablations} for more ablations on the dataset, algorithm components, baselines, visualizations, and results on more environments.

\textbf{How well does our embedding work?} To further verify the effectiveness of the ICVF embedding beyond theoretical insights, we compare our method with  ICVF embeddings to use of other embeddings. We identify two contrastive learning-based baselines that are most suitable for our scenario: CURL~\citep{laskin2020curl} and PW-DICE~\citep{yan2023offline}. Both methods use InfoNCE~\citep{oord2018representation} as their contrastive loss for better state embeddings. Their difference: 1) CURL updates embeddings with an auxiliary loss during online training, while PW-DICE updates embeddings before all other training; 2) CURL compares the current state with different noises added as positive contrast examples, while PW-DICE uses the next states as positive contrast samples. Tab.~\ref{tab:embedding} summarizes the results. The result shows that 1) state embeddings generally aid learning, and thus a good state embedding is necessary; and 2) our proposed method works best. 

\begin{table}[t]
\color{black}
\centering
\caption{Ablation of different embedding methods with LWAIL. The results show that ICVF embeddings outperform other contrastive learning-based embeddings.}
\resizebox{0.9\textwidth}{!}{
\begin{tabular}{lccccc}
\toprule
 & \textbf{Hopper} & \textbf{HalfCheetah} & \textbf{Walker2D} & \textbf{Ant} & \textbf{Average} \\
\midrule
\text{LWAIL} & 108.84 $\pm$ 0.79 & 90.40 $\pm$ 3.71 & 106.51 $\pm$ 1.14 & 90.53 $\pm$ 3.26 &\textbf{99.07} \\
\text{PW-DICE} & 110.60 $\pm$ 0.77 & 46.07 $\pm$ 27.95 & 106.63 $\pm$ 1.03 & 85.36 $\pm$ 8.12 & 87.16 \\
\text{CURL} & 105.70 $\pm$ 1.22 & 87.62 $\pm$ 5.10 & 102.97 $\pm$ 4.19 & 52.03 $\pm$ 8.33 & 87.08 \\
\text{No Embedding} & 108.34 $\pm$ 3.42 & 85.98 $\pm$ 3.42 & 62.39 $\pm$ 20.43 & 40.72 $\pm$ 18.95 & 74.36 \\
\bottomrule
\end{tabular}
}

\label{tab:embedding}
\end{table}

\textbf{How robust is LWAIL with respect to environment noise?} One limitation of our Thm.~\ref{thm:main} is that it assumes the environment to be near-deterministic. To empirically validate the robustness of our method with noisy transitions, we inject Gaussian noise with varying standard deviations into the actions $a\in[-1,1]^n$ performed in the MuJoCo environments. This noise introduces controlled randomness, emulating real-world uncertainty and variability in system responses. Other experimental settings follow the main experiments. The results are listed in Tab.~\ref{tab:noise}. The results clearly show that our method is robust to stochastic environments.

\begin{table}[t]
    \centering
    \small
    \color{black}
    \caption{LWAIL with and without environment noise on MuJoCo environments. The result shows that LWAIL is robust to transition noise in the environment.}
    \begin{tabular}{cccccc}
    \toprule
        & \textbf{Hopper} & \textbf{HalfCheetah} & \textbf{Walker2D} & \textbf{Ant} & \textbf{Average} \\
    \midrule
     LWAIL & 110.52 $\pm$ 1.06 & 86.71 $\pm$ 5.67 & 105.30 $\pm$ 2.33 & 80.56 $\pm$ 13.09 & 95.77 \\
    LWAIL with noise 0.1 & 110.25 $\pm$ 1.78 & 82.04 $\pm$ 6.68 & 104.98 $\pm$ 1.44 & 79.95 $\pm$ 12.08 & 94.31 \\
    LWAIL with noise 0.2 & 108.27 $\pm$ 2.38 & 79.29 $\pm$ 5.21 & 104.61 $\pm$ 2.34 & 77.10 $\pm$ 12.08 & 92.32 \\
    LWAIL with noise 0.5 & 107.93 $\pm$ 2.38 & 48.39 $\pm$ 3.95 & 41.28 $\pm$ 29.66 & -12.94 $\pm$ 9.12 & 46.17 \\
    \bottomrule
    \end{tabular}
    
    \label{tab:noise}
\end{table}
}

\textbf{Applying ICVF embedding on baselines.} We also show that our proposed solution outperforms existing methods with ICVF embedding, both Wasserstein-based (IQlearn, WDAIL) and $f$-divergence based. The results are summarized in Tab.~\ref{tab:icvfother}. We find that 1) our method outperforms prior methods with ICVF embedding, and 2) ICVF best suits our AIL framework with TD3 as the downstream RL algorithm (see Tab.~\ref{tab:downstream} for our method with other downstream RL algorithms).

\begin{table}[ht]
\centering
\caption{ICVF with other methods. Our method far outperforms baselines with ICVF embeddings.}
\resizebox{0.9\textwidth}{!}{
\begin{tabular}{lccccc}
\toprule
 & \textbf{Hopper} & \textbf{HalfCheetah} & \textbf{Walker2D} & \textbf{Ant} & \textbf{Average} \\
\midrule
\text{LWAIL} & 108.84 $\pm$ 0.79 & 90.40 $\pm$ 3.71 & 106.51 $\pm$ 1.14 & 90.53 $\pm$ 3.26 &\textbf{99.07} \\
\text{WDAIL+ICVF} & 110.02 $\pm$ 0.53 & 30.07 $\pm$ 2.32 & 68.68 $\pm$ 9.16 & 3.42 $\pm$ 1.01 & 53.04 \\
\text{IQlearn+ICVF} & 29.80 $\pm$ 10.12 & 3.82 $\pm$ 0.98 & 6.54 $\pm$ 1.23 & 8.91 $\pm$ 0.45 & 12.27 \\
\text{GAIL+ICVF} & 8.96 $\pm$ 2.09 & 0.12 $\pm$ 0.40 & 3.98 $\pm$ 1.41 & -3.09 $\pm$ 0.85 & 2.49 \\
\bottomrule
\end{tabular}
}
\label{tab:icvfother}
\end{table}
\section{Related Work}

Due to the space limit in the main paper,  we only discuss three areas of the literature that are most related to our work. See Appendix~\ref{sec:ext_relate} for an extended discussion of related work. 

\textbf{Wasserstein-based imitation learning.} The Wasserstein distance~\citep{kantorovich1939mathematical} is widely adopted in IL/RL~\citep{xiao2019wasserstein,Agarwal2021ContrastiveBS,fickinger2022gromov}. It provides a geometry-aware measure between policy occupancies with more informative learning signals~\citep{WGAN}. Among different forms of the Wasserstein distance, the primal form~\citep{PWIL,luo2023otr,yan2023offline,AILOT} and the Rubinstein-Kantorovich dual~\citep{KRduality,WDAIL,IQlearn,sun2021softdice} are most prominent. The former allows for larger freedom in its underlying metric, but requires a regularizer~\citep{yan2023offline}, surrogates~\citep{PWIL}, or a direct match between trajectories~\citep{luo2023otr, AILOT}. The latter is easier to optimize with gradient descent, but its metric is limited to Euclidean, which is often suboptimal~\citep{WGANwork, yan2023offline}. Our work chooses the latter but overcomes its shortcomings.
Among all these works,  IQ-learn~\citep{IQlearn} is most similar to ours. Our objective in Eq.~\eqref{eq:Wdis} is a special case of IQ-learn with Wasserstein distance. However, three key differences exist: 1) IQ-learn uses SAC~\citep{Haarnoja2018SoftAO} instead of TD3; 2) IQ-learn focuses on $\chi^2$-divergence in the online setting, which was found to be less effective in several prior works~\citep{smodice, yan2023offline}; 3) We point out and overcome the metric limitation by adopting ICVF, which is not considered in IQ-learn.

\textbf{Offline-to-online IL.} While offline IL~\citep{zolna2020offline,smodice,lobsdice} and online IL~\citep{GAIL,AIRL} are both well-studied areas, offline-to-online IL is relatively under-explored, especially when compared with offline-to-online RL~\citep{schmitt2018kickstarting,IQL,zhang2023policy} which combines the best of offline RL (high data efficiency) and online RL (active data collection). While there are some works using offline data to aid online imitation~\citep{watson2024coherent} by building dynamic models~\citep{chang2021mitigating,yue2023clare} or aligning discriminator and policy~\citep{yue2024ollie}, they differ in two aspects from our proposed LWAIL: 1) their solution requires medium-to-high quality offline data and does not work well with random offline data, with which our ICVF-learned metric works well; 2) they require state-action pairs for expert demonstrations, while our method only requires expert states.

\textbf{State embedding.} Many works have explored the possibility of learning a good state space embedding that  better captures the dynamics of the environment and boosts RL performance~\citep{zhang2020learning, ICVF,modi2024model}. These works can be roughly categorized into two groups: 1) the `theoretical group', which focuses on state equivalence (also known as ``bisimulation'')~\citep{zhang2020learning, kemertas2021towards, le2021metrics} and the low-rank property of the MDP~\citep{agarwal2020flambe, uehara2021representation, modi2024model}; and 2) the `empirical group' often tested on visual RL with high-dimensional input~\citep{anand2019unsupervised,laskin2020curl,yarats2021mastering,giammarino2023adversarial}), which focuses on representation learning~\citep{ha2018world,hafner2023mastering, bruce2024genie}, autoencoder methods~\citep{senthilnath2024self}, and contrastive learning~\citep{TCN2017,anand2019unsupervised,laskin2020curl}. The recently proposed ICVF~\citep{ICVF} studies an empirical, intention-based method for state embedding computation. It was shown to be effective in downstream tasks~\citep{ICVF,AILOT}. Our work is the first to leverage ICVF state embeddings to overcome the metric limitation of the KR duality of the Wasserstein distance.

\vspace{-1pt}
\section{Conclusion}
\label{sec:conclusion}
We propose a novel adversarial imitation learning approach for state distribution matching with Wasserstein distance. Unlike prior methods that rely on Euclidean distance metrics, we optimize this distance metric by leveraging an embedding learned by the Intention Conditioned Value Function (ICVF), which captures environmental dynamics. 
This allows us to better align the expert-agent state distributions, even with sparse state-only demonstrations. Through theoretical analysis and multiple experiments, we demonstrate that the ICVF-learned distance metric outperforms baselines, enabling more efficient and accurate imitation from limited expert data with only one expert trajectory.
We believe our work provides a new direction for improving state-only imitation learning by using the Wasserstein distance while addressing the limitations of traditional distance metrics. 

\section*{Acknowledgments}

This work was supported in part by NSF under Grants 2008387, 2045586, 2106825, and 2519216, the DARPA Young Faculty Award, the ONR Grant N00014-26-1-2099, the NIFA Award 2020-67021-32799, the Amazon-Illinois Center on AI for Interactive Conversational Experiences, the Toyota Research Institute, and the IBM-Illinois Discovery Accelerator Institute. This work used computational resources, including the NCSA Delta and DeltaAI supercomputers through allocations CIS230012, CIS230013, and CIS240419 from the Advanced Cyberinfrastructure Coordination Ecosystem: Services \& Support (ACCESS) program, as well as the TACC Frontera supercomputer and Amazon Web Services (AWS) through the National Artificial Intelligence Research Resource (NAIRR) Pilot.

\section*{Ethics Statement}
\label{sec:impact}

The proposed imitation learning method enables more efficient policy learning from minimal expert demonstrations (even state-only) and random data, potentially democratizing access to RL applications in robotics and assistive technologies while reducing reliance on costly expert annotations. It is important, however, to utilize our work adequately in real-life applications to avoid potential misuse (e.g. military purposes) and job loss.

\section*{Reproducibility Statement}
\label{sec:reprod}
We include the procedure of our algorithm in Alg.~\ref{alg:main}. For environments used in our experiments, we list their details in Sec.~\ref{sec:toy} (for Maze2d) and Appendix~\ref{sec:app_env} (for MuJoCo); for datasets in our experiment, we list their statistics in Appendix~\ref{sec:dataset}; for the hyperparameters of our method, we list them in Tab.~\ref{tab:hpp} in Appendix~\ref{sec:app_hyp}; for implementation of the baselines, their related repositories and licenses, we summarize them in Appendix~\ref{sec:app_baseline}. Finally, we state our computational resource consumption in Appendix~\ref{sec:comp}. We published our code at \url{https://github.com/JackyYang258/LWAIL}.

\bibliography{iclr2026_conference}
\bibliographystyle{iclr2026_conference}

\newpage
\newpage
\section*{Appendix: Latent Wasserstein Adversarial Imitation
Learning}
\appendix
Our appendix is organized as follows. 
We first discuss limitations of our work in Sec.~\ref{sec:limitation}, and more areas of related work in Sec.~\ref{sec:ext_relate}. 
In Sec.~\ref{sec:app_icvf}, we discuss extended preliminaries of our work, including TD3 and a more detailed explanation of Eq.~\eqref{eq:icvftrain} and ICVF. {\color{black} In Sec.~\ref{sec:math}, we provide a mathematical proof for Thm.~\ref{thm:main}.} In Sec.~\ref{sec:expdetail}, we provide the details of the environment in our experiments (Sec.~\ref{sec:app_env}), the dataset used in our experiments (Sec.~\ref{sec:dataset}), the hyperparameters we used for our method (Sec.~\ref{sec:app_hyp}), and the details for our baselines (Sec.~\ref{sec:app_baseline}). We provide a variety of ablations in Sec.~\ref{sec:moreablations}. In Sec.~\ref{sec:notation}, we summarize the notation used in our paper. Finally, in Sec.~\ref{sec:comp}, we state the computational resources used for running our experiments.

{\color{black}In particular, we provide a detailed analysis of our method in Sec.~\ref{sec:moreablations}, which includes the following content:}

\begin{itemize}
{\color{black}
\item \textbf{Dataset ablations:} How will the performance change if size (Sec.~\ref{sec:randomsize}) and quality (Sec.~\ref{sec:randomquality}) of the offline state-only dataset $I$ change? How will the performance change if the size of the expert dataset $E$ changes (Sec.~\ref{sec:expertsize}), and how does LWAIL work with incomplete expert trajectories }(Sec.~\ref{sec:incomplete})?
\item \textbf{Algorithm component ablations:} What happens if we use other RL algorithms (e.g., PPO~\citep{schulman2017proximal} or DDPG~\citep{DDPG}) in the imitation stage (Sec.~\ref{sec:downstreamrl}), and is the Sigmoid reward mapping that we use in LWAIL always beneficial for RL on MuJoCo environments (Sec.~\ref{sec:sigmoidreward})?
\item \textbf{Detailed performance analysis:}  How robust is our method with respect to different hyperparameters (Sec.~\ref{sec:hyperparam})? What are the t-SNE visualizations of the trajectories in the embedding space for the Ant and Hopper environment (Sec.~\ref{sec:morevis})? How does our pseudo-reward look like during training (Sec.~\ref{sec:pseudoreward}), and how does it perform compared to a ground-truth reward signal (Sec.~\ref{sec:gtrs})?
\item \textbf{Baseline analysis:} How do the baselines perform when they are trained for a longer time (Sec.~\ref{sec:otherlonger})?
\item \textbf{More environments:} How does LWAIL perform beyond MuJoCo environments (Sec.~\ref{sec:ballincup}), and can LWAIL learn from expert trajectories with mismatched dynamics (Sec.~\ref{sec:mismatch})?
\end{itemize}

\section{Limitations}
\label{sec:limitation}
Similar to other prior AIL methods such as WDAIL~\citep{WDAIL}, our pipeline requires an iterative update of the actor-critic agent and the discriminator during online training. The update frequency needs to be balanced during training. Also, testing our method on more complicated environments, such as image-based ones, is  interesting   future research.

\section{Extended Related Work}
\label{sec:ext_relate}
\textbf{Imitation by occupancy matching / adversarial training.} GAIL~\citep{GAIL} is one of the first works to study adversarial imitation learning. A bi-level optimization of ``RL over inverse RL'' is considered, which corresponds to state-action occupancy matching between an expert and the learner's policy. Follow-up works~\citep{AIRL,kostrikov2018discriminator,GAIFO,OPOLO} have further explored the adversarial training paradigm, jointly training 1) a discriminator to distinguish policy occupancy from expert occupancy; and 2)  the policy fitting expert demonstrations. While some occupancy matching works like DIstribution Corrected Estimation (DICE)~\citep{smodice,lobsdice, yan2023offline} are trained using a single-level optimization instead of an adversarial setup, they essentially derive a closed-form solution for the policy given the discriminator. Most works in this field, however, focus on $f$-divergence (especially KL~\citep{OPOLO} or $\chi^2$~\citep{smodice}) minimization, making them not only unaware of the underlying geometry of the states which leads to less informative learning signals, but also requires them to consider the same support set for the expert and learner occupancy distributions, which could lead to additional theoretical constraints and numerical issues~\citep{smodice}. In contrast, our work studies the Wasserstein distance and overcomes initial limitations.

\begin{remark}
Note, the key difference between DICE methods and our Wasserstein AIL framework is that DICE methods are generally not compatible with Wasserstein distance. DICE methods usually use $f$-divergences, especially KL and 
$\chi^2$-divergences because the Bellman constraint is a linear one; to turn the optimization into an unconstrained one, the primal objective must be non-linear. Unfortunately, Wasserstein distance is a linear objective, hence incompatible with DICE. While there are some exceptions, they must add an additional non-linear regularizer to bypass this issue, such as a state-action KL divergence between learner and non-expert data~\citep{yan2023offline} or an entropy regularizer~\citep{sun2021softdice}.  In contrast, our current Wasserstein AIL framework does not exhibit this issue.
\end{remark}

\begin{remark}
Theoretically, with infinite data, perfect convergence and while ignoring regularizers (e.g., offline IL methods like LobsDICE~\citep{lobsdice} use a pessimistic KL-divergence term with suboptimal dataset occupancy distribution, which deviates from the optimal solution), all transition-distribution matching methods optimizing $(s,s')$, $(s,a)$ or $(s,a,s')$ can encode optimal paths regardless of the underlying distance metric. However, this idealization does not remove the practical need for a geometry-aware embedding in real policy learning. Two reasons are central:

\begin{itemize}
\item \textbf{Neural networks struggle when semantically different states are numerically close:} If two dynamically distinct states are nearly identical in raw Euclidean space, the value function cannot reliably separate them. A good embedding makes these differences linearly separable and  eases optimization substantially.

\item \textbf{Wasserstein AIL fundamentally depends on the quality of the underlying metric:} Unlike f-divergences, Wasserstein distances are defined directly through the metric structure. A poor metric yields a discontinuous, highly non-smooth loss landscape. Enforcing Lipschitzness with non-Euclidean metrics is difficult in practice; embedding-based Euclidean metrics provide a practical and effective solution.
\end{itemize}

Thus, even though minimizing transition distributions is theoretically sufficient asymptotically, in practice, a good state embedding is crucial for \textbf{learnability, stability, and sample efficiency} under Wasserstein AIL.

\end{remark}

\textbf{Imitation from observation.} Imitation (Learning) from Observation (LfO) aims to retrieve an expert policy without labeled actions. This is particularly interesting in robotics, where the expert action can be either inapplicable during cross-embodiment imitation~\citep{TCN2017} or unavailable when imitating from videos~\citep{Pari2021TheSE}. The three primary strategies of LfO can be categorized as follows: 1) minimizing an occupancy divergence through DICE methods~\citep{OPOLO,lee2021optidice,smodice,lobsdice,DBLP:conf/iclr/KimSLJHYK22,yan2023offline} or iterative inverse-RL updates~\citep{GAIFO,Xu2019PositiveUnlabeledRL,zolna2020offline}; 2) predicting missing actions through inverse dynamics modeling~\citep{BCO, Kumar2019LearningNS}; and 3) similarity-based reward assignment~\citep{TCN2017,chen2019generative,wu2019fisl}. Our work belongs to the first category, and adopts the Wasserstein distance as the measure between occupancies, which improves results over prior works.

\textbf{Model-based Imitation Learning.} While ICVF implicitly encodes environment dynamics into its embedding space, some imitation learning work takes a more explicit, model-based way to build a dynamic model or world model~\citep{yue2023clare}.
For example, RMBIL~\citep{rmbil} uses a Neural ODE~\citep{chen2018neural} to model state transition, while CMIL~\citep{cmil} uses DreamerV2~\citep{hafner2020mastering}; many other studies~\citep{chang2021mitigating, hidil}
simply utilize normal neural networks as inverse dynamic models. With such dynamic models, a wide range of options have been explored, including recovering expert action~\citep{BCO}, alignment to new dynamics~\citep{hidil}
, planning in advance~\citep{yin2022planning, li2024reward, probabilistic, wu2020model}, generating data from simulated rollouts~\citep{chang2021mitigating,dmil,kidambi2021mobile,vmail},
and making the environment differentiable~\citep{baram2016model}.
Among these more explicit model-based methods, ICVF embedding is special as it does not require any action; while many aforementioned methods do not need expert action~\citep{BCO,kidambi2021mobile}, they still need non-expert actions to build the dynamic model. 

\section{Extended Preliminaries}
\label{sec:app_icvf}

\textbf{TD3.} Twin Delayed Deep Deterministic Policy Gradient (TD3)~\citep{TD3} extends the Deep Deterministic Policy Gradient (DDPG)~\citep{DDPG} algorithm, designed to mitigate the overestimation bias commonly found in Q-learning. TD3 introduces three key modifications: 1) \textit{Clipped Double Q-learning}: TD3 maintains two Q-networks, $Q_{\theta_1}$ and $Q_{\theta_2}$ parameterized by $\theta_1$ and $\theta_2$ respectively, and uses the smaller of the two as the critic loss to reduce overestimation~\citep{lee2023suppressing}. The clipped value stabilizes training and results. 2) \textit{Delayed Policy Updates}: To further stabilize learning, the policy is updated less frequently than the critic, reducing the chance of policy updates based on inaccurate Q-values. 3) \textit{Target Policy Smoothing}: To address overfitting to deterministic policies, when calculating the target $y$ for the critic loss, a Gaussian noise $\epsilon$ with variance $\sigma^2>0$ is clipped with a threshold $c_0>0$ before being added to the target action $a'$. This regularizes the policy, making it more robust to small state changes. TD3 improves upon DDPG and works well, particularly in high-dimensional continuous action spaces. In this paper, we adopt TD3 for the downstream RL component of our method.

\textbf{Extended Introduction of Intention Conditioned Value Function (ICVF).}
Intuitively, $V(s, s_+, z)$ is designed to evaluate the likelihood of the following question: \textit{How likely am I to see $s_+$ in the future trajectory if I act with the intention of reaching $z$ from state $s$?} The learning of ICVF is similar to other value-learning algorithms. For policy $\pi(\cdot|s,s_+)$, ICVF satisfies the following Bellman equation:
\begin{equation}
\begin{aligned}
V^{\pi}(s, s_+, z) &= \mathbb{E}_{z \sim \pi(\cdot|s, s_+)} \left[ \mathbb{I}(s = s_+) + \gamma \mathbb{E}_{s'\sim P_z(\cdot|s_t)} \left[ V^\pi(s', s_+, z) \right] \right]. \\
\end{aligned}
\end{equation}
Here, $(s,s')$ is a transition and $P_z(s_{t+1}|s)$ is the transition probability from $s_t$ to $s_{t+1}$ when acting according to intent $z\in\mathcal{S}$.\footnote{Theoretically, $z$ can also be task-specific labels, e.g., indices for different tasks, but ICVF is mainly motivated by goal-based RL. Thus, in both the main part of ICVF and in our work, we have $z\in\mathcal{S}$.} 
Here, $z$ serves as a pseudo-action label and $\mathbb{I}(s = s_+)$ serves as a pseudo-reward label. Thus, the original reward or action are not needed in ICVF training.

The original paper adopts implicit Q-learning (IQL) for ICVF learning. In one update batch, we sample transition $(s,s')$, potential future outcome $s_+$, and intent $z$. Similar to the original IQL~\citep{IQL}, we update the critic with asymmetric critic losses to avoid out-of-distribution overestimation. To do this, we adjust the weights of the critic loss based on the positivity of the \textit{advantage}. Note, as we care about whether the transition $(s,s')$ corresponds to acting with intention $z$, our goal $s_+$ is equal to $z$.\footnote{Intuitively, we use ``the best actions with the highest estimated value'' of the next step in traditional Bellman operators; here, apparently the intention of reaching $z$ is the best choice if the target future state is $z$.} The advantage $A$ is defined as:
\begin{equation}
    A = \mathbb{I}(s = s_+) + \gamma V_\theta(s',z,z) - V_\theta(s,z,z).
\end{equation}

Following that, the critic loss is defined as:
\begin{equation}
\label{eq:icvftrain}
    \mathcal{L}(V_\theta)=\mathbb{E}_{(s,s'),z,s_+} \left[ | \alpha - \mathbb{I}(A < 0) | (V_\theta(s, s_+, z) - \mathbb{I}(s = s_+) - \gamma V_\theta^{\text{old}}(s', s_+, z) )^2\right],
\end{equation}
where $z\in\mathcal{S}$ and $s_+\in\mathcal{S}$ are sampled from random states in the dataset, from random future states in the trajectory, or as the current state with different probability, respectively. Note, as we want to penalize overestimation, the coefficient for overestimation $|\alpha-0|=\alpha$ should be greater than underestimation $|\alpha-1|=1-\alpha$, and thus $\alpha>0.5$. In LWAIL, we use $\alpha=0.9$ following the original ICVF paper. 

\begin{remark}
The principle of ICVF embedding is related to  \textit{bisimulation metrics}~\citep{castro2020scalable}. The \textbf{bisimulation metric} trains an encoder such that in its embedding space, the distance between the embedding approximately equals the bisimulation metric, which is ``reward difference + Wasserstein distance of transition distribution in the embedding space.'' For example, the ``distance between the embedding'' can be approximated by another neural network, and the Wasserstein distance is bypassed by the deterministic property of MDP~\citep{castro2020scalable}. Alternatively, the ``distance between the embedding'' can be Manhattan distance with 2-Wasserstein distance for a convenient closed form~\citep{zhang2020learning}. Meanwhile, \textbf{ICVF embedding} trains a value function $V(s,s_+,z)$
 conditioned on intention (goal state) $z$ and reward function being the visit probability of the state $s_+$. 
 $V(s,s_+,z)$ is designed to be a product of the embedding and other components. They are similar as they:

 \begin{itemize}
 \item Both use Wasserstein distance (ICVF itself does not use Wasserstein distance, but LWAIL does);
 \item Both address the issue described in Fig.~\ref{fig:teaser}(a), which describes the similarity of two states regardless of their numerical values. Bisimulation does so by considering embedded state similarity within one step reach (propagated to trajectories during learning), while ICVF does so by discrimination in different values for the value function with the same intention $z$ 
 and the same reward based on $s_+$.
 \item Both produce representations that can generalize to unseen reward functions. ICVF does so by carefully designing the value function architecture, such that all downstream value functions with state-based rewards are linear representations of the embedding; bisimulation does so by encoding all the causal ancestors of the reward under certain conditions (see Thm. 4 in~\citet{zhang2020learning} for details).
 \end{itemize}

The key difference is: \textit{bisimulation is action-focused, while ICVF is state-focused.} This difference leads to different advantages of the two embeddings: ICVF can work with only the state available, but the notion of ``intention'' assumes goal-based downstream rewards for generalization guarantee; bisimulation metric requires actions for training, and is designed for denser reward functions (since the metric directly compares reward). The ability of ICVF utilizing low-quality state-only data which is otherwise unusable aligns with our motivation.
 
\end{remark}

{\color{black}
\section{Mathematical Proofs}
\label{sec:math}
Here we provide the proof of Thm.~\ref{thm:main}, which we restate for readability:

\begin{theorem}
\label{thm:main_app}
In a near-deterministic MDP with $\gamma<1$, For any converged policy $\pi_z$ learned in ICVF with goal $z$, there exists a vector $\eta$ such that, for any adjacent $(s,s')\in I$, $d^{\pi_z}_{ss}(s,s')\approx \eta^T\phi_\theta(s)$, i.e., the state-pair occupancy is approximately a linear combination of $\phi$. 
\end{theorem}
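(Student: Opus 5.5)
The plan is to connect the state-pair occupancy of $\pi_z$ to the ICVF value $V_\theta(s,s_+,z)=\phi_\theta(s)^T T_\theta(z)\psi_\theta(s_+)$, and then show that the factor depending on $s$ is exactly $\phi_\theta(s)$ while everything else is absorbed into a fixed vector $\eta$.

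\textbf{Step 1: factor the occupancy.} By definition,
\begin{equation*}
d^{\pi_z}_{ss}(s,s')=(1-\gamma)\sum_{t\ge 0}\gamma^t \Pr(s_t=s)\,P_z(s'\mid s)=d^{\pi_z}_s(s)\,P_z(s'\mid s).
\end{equation*}
In a near-deterministic MDP, take $(s,s')\in I$ adjacent with $s'$ the (near-)unique successor under $\pi_z$. Then $P_z(s'\mid s)\approx 1$, so
\begin{equation*}
d^{\pi_z}_{ss}(s,s')\approx d^{\pi_z}_s(s).
\end{equation*}
The paper's phrase ``$\approx$'' is meant to cover this step. For adjacent pairs that $\pi_z$ does not take, the occupancy is near zero. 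I would fold that case in the same way, or restrict to the transitions used by $\pi_z$.

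\textbf{Step 2: express $d^{\pi_z}_s(s)$ through $V$.} Let $\mu_0$ be the initial-state distribution. By the definition of $V$ as an unnormalized discounted visitation,
\begin{equation*}
d^{\pi_z}_s(s)=(1-\gamma)\sum_{s_0}\mu_0(s_0)\,V(s_0,s,z).
\end{equation*}
This makes $s$ appear in the outcome slot, i.e.\ it produces $\psi_\theta(s)$, not $\phi_\theta(s)$. To get $\phi_\theta(s)$ I would instead use the Bellman recursion
\begin{equation*}
V(s,s_+,z)=\mathbb{I}(s=s_+)+\gamma V(s',s_+,z),
\end{equation*}
which holds for a converged ICVF along near-deterministic transitions. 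This gives
\begin{equation*}
\mathbb{I}(s=s_+)=\phi_\theta(s)^T T_\theta(z)\psi_\theta(s_+)-\gamma\,\phi_\theta(s')^T T_\theta(z)\psi_\theta(s_+).
\end{equation*}
Choose $s_+=s$. Then use near-determinism a second time: the dynamics-aware embedding of the successor should be close to a linear image of the predecessor, i.e.\ $\phi_\theta(s')\approx A_z\phi_\theta(s)$. This follows from the linear transition structure implied by $V$ being bilinear in $(\phi,\psi)$ with the intention matrix $T_\theta(z)$.

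\textbf{Step 3: collect coefficients.} Combining Steps 1 and 2 yields $d^{\pi_z}_{ss}(s,s')\approx \phi_\theta(s)^T w$, where the vector $w$ is built from $T_\theta(z)$, $A_z$, and the expectation $\mathbb{E}_{\mu_0}[\psi_\theta(\cdot)]$. Setting $\eta:=w$ makes $\eta$ independent of $(s,s')$ for a fixed goal $z$.

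\textbf{Main obstacle.} The hardest step is obtaining $\phi_\theta(s)$ rather than $\psi_\theta(s)$ with a coefficient that does not depend on $s$. My first attempt would be the successor-representation argument above, which expresses the occupancy via $\phi$ through the recursion $\phi(s)^T T\psi = \mathbb{I} + \gamma\,\phi(s')^T T\psi$. If that fails, I would impose the approximation assumption of the original ICVF analysis: for the converged $\pi_z$, downstream values are linear in $\phi$. Then $d^{\pi_z}_{ss}(\cdot,\cdot)$ can be treated as the value of a reward concentrated on the transition, and it is linear in $\phi_\theta(s)$ with approximation error controlled by the near-determinism gap and by $\gamma<1$, which keeps all sums bounded.
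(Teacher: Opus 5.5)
There is a genuine gap: Steps 2 and 3 do not produce $\phi_\theta(s)$ legitimately, and the missing idea is to read the one-step factor off $V(s,s',z)$ itself.

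Your Step 1 is fine as an identity. But by replacing $P_z(s'\mid s)$ with $1$ you throw away the only factor in which $s$ can enter an ICVF value as the \emph{starting} state. What remains, $d^{\pi_z}_s(s)$, is a visitation quantity, and as you observe it is naturally linear in $\psi_\theta(s)$. Step 2 does not repair this. The relation $\phi_\theta(s')\approx A_z\phi_\theta(s)$ is not implied by bilinearity of $V_\theta$; it is an extra linear-dynamics assumption on the embedding that ICVF training does not enforce. Even granting it, the Bellman identity at $s_+=s$ only gives the normalization $\phi_\theta(s)^T(I-\gamma A_z^T)T_\theta(z)\psi_\theta(s)\approx 1$. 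That says nothing about $d^{\pi_z}_s(s)$, so the fixed vector $w$ of Step 3 is asserted rather than derived. (Also, the initial-state average would attach to $\phi_\theta$, not $\psi_\theta$, since $s_0$ sits in the first slot.) Your fallback has the same orientation problem. The value of a reward concentrated on $(s,s')$ is linear in $\phi_\theta$ of the \emph{start} state, giving $d^{\pi_z}_{ss}(s,s')\approx(1-\gamma)\,\mathbb{E}_{\mu_0}[\phi_\theta(s_0)]^T T_\theta(z)\psi_\theta(s)\,P_z(s'\mid s)$. That is again linear in $\psi_\theta(s)$, not $\phi_\theta(s)$.

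The missing idea, which is the whole of the paper's argument, is to keep the one-step factor and identify it with the ICVF value taken with $s$ as start and $s'$ as outcome. In other words, use $s_+=s'$ rather than $s_+=s$. With the appendix's convention $V(s,s_+,z)=\sum_{i\ge1}\gamma^{i-1}\Pr(s_i=s_+\mid s_0=s)$, one has
\[
d^{\pi_z}_s(s)\,V(s,s',z)=d^{\pi_z}_{ss}(s,s')+(1-\gamma)\sum_{t\ge0}\gamma^t\Pr(s_t=s)\sum_{i\ge2}\gamma^{i-1}\Pr(s_{t+i}=s'\mid s_t=s).
\]
A case analysis then controls the remainder. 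If $s'$ lies on the path toward $z$, a converged $\pi_z$ in a near-deterministic MDP does not revisit $s'$, so the remainder is negligible. If $s'$ lies on a cycle of length $T$ through $z$, revisits occur every $T$ steps and the left side is $\approx d^{\pi_z}_{ss}(s,s')/(1-\gamma^T)$. Hence $d^{\pi_z}_{ss}(s,s')\approx c\,d^{\pi_z}_s(s)\,\phi_\theta(s)^T T_\theta(z)\psi_\theta(s')$ with $c\in\{1,1-\gamma^T\}$, and the paper sets $\eta=c\,d^{\pi_z}_s(s)\,T_\theta(z)\psi_\theta(s')$.

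Note that this $\eta$ depends on $(s,s')$. The paper never produces a single coefficient vector uniform over pairs, which is exactly the obstacle you were fighting. Its substantive content is the relation $d^{\pi_z}_{ss}\approx c\,d^{\pi_z}_s V$. If you drop the uniformity requirement, your Step 1 plus this case analysis (in case 1 it is just $V(s,s',z)\approx P_z(s'\mid s)$) recovers the paper's conclusion. If you keep the requirement, neither your argument nor the paper's establishes it.
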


\begin{proof}

We first consider the trivial case: $z$ is unreachable from $s$. In this case, we have $V(s,s_+, z)=0$ for any $s_+$, which is apparently a linear combination of $\phi$ with coefficient $0$.

Next, consider the definition of the ICVF value function $V^\pi(s,s_+,z)\in\mathbb{R}$. For goal $z$, current state $s$ and some possible future state $s_+$, 
\begin{equation}
V^\pi(s,s_+,z)=\sum_{i=1}^{\infty}\gamma^{i-1}p(s_{i}=s_+|s_0=s),
\end{equation}
where $i$ denotes the future steps. Meanwhile, the state and state pair occupancy $d^\pi(s)$, $d^\pi(s,s_+)$ are defined as 
\begin{equation}
\begin{aligned}
d_s^\pi(s)&=(1-\gamma)\sum_{t=0}^{\infty}\gamma^t p(s_t=s),\\
d_{ss}^\pi(s,s_+)&=(1-\gamma)\sum_{t=0}^{\infty}\gamma^t p(s_{t+1}=s_+, s_t=s).
\end{aligned}
\end{equation}
Therefore for any adjacent $(s,s_+)\in I$,
\begin{equation}
\begin{aligned}
&d_s^\pi(s)\cdot V^\pi(s,s_+,z)\\ =&(1-\gamma)\left(\sum_{t=0}^{\infty}\gamma^tp(s_t=s)\right)\left(\sum_{i=1}^{\infty}\gamma^{i-1}p(s_{i+t}=s_+|s_t=s)\right)\\
=&(1-\gamma)\left(\sum_{t=0}^\infty \gamma^t p(s_t=s)\right) \left(p(s_{t+1}=s_+|s_t=s)+\sum_{i=2}^{\infty}\gamma^{i-1}p(s_{i+t}=s_+|s_t=s)\right)\\
=&d_{ss}^\pi(s,s_+)+(1-\gamma)\left[\sum_{t=0}^{\infty}\gamma^tp(s_t=s)\sum_{i=2}^{\infty}\gamma^{i-1}p(s_{i+t}=s_+|s_t=s)\right].
\end{aligned}
\end{equation}
We now consider two different cases of $s_+$. They correspond to the two cases illustrated in Fig.~\ref{fig:markov}. We will prove that either the second term is negligible (case 1), or it is a multiple of $d^\pi_{ss}(s,s_+)$ (case 2). 

\begin{figure}[t]
    \centering
    \includegraphics[width=0.5\linewidth]{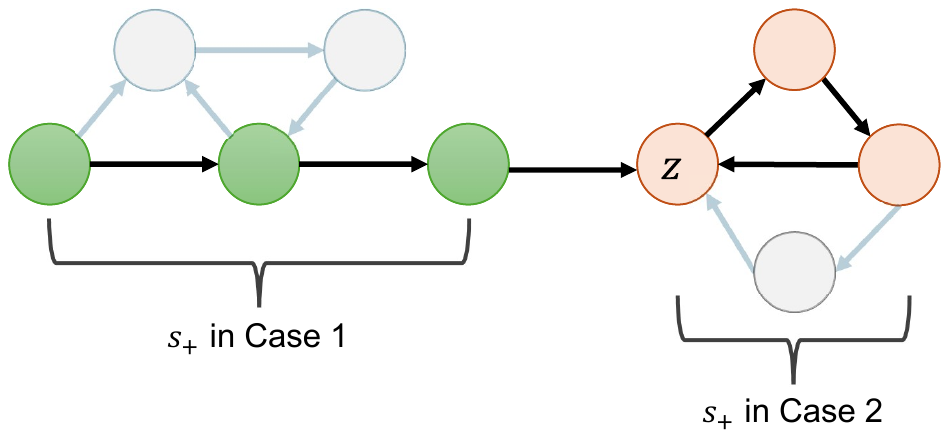}
    \caption{\color{black}An illustration of ``case 1'' (green states) and ``case 2'' (orange states) in our proof, and the $z$ is the intention (i.e. goal). The transparent states are longer paths, which are unlikely be visited with a converged policy and $\gamma<1$ in a near-deterministic MDP.}
    \label{fig:markov}
\end{figure}

\textbf{Case 1: $s_+$ is on the path towards reaching $z$.}

Since $s_+$ is steadily reachable from $s$ in a near-deterministic environment, visiting $s'$ directly from $s$ instead of later returning from other states to $s'$ shortens the trajectory towards $z$. It should thus be preferred by any converged policy $\pi_z$ with $\gamma<1$. Thus, we have 
\begin{equation}
p(s_{i+1}=s_+|s_t=s)\gg p(s_{i+k}=s_+|s_t=s), k\geq 2 \text{ and } (s,s_+)\in I,
\end{equation}
which means the $(1-\gamma)\left[\sum_{t=0}^{\infty}\gamma^tp(s_t=s)\sum_{i=2}^{\infty}\gamma^{i-1}p(s_{i+t}=s_+|s_t=s)\right]$ term \textit{is negligible.}
In this case, $d^{\pi}_{ss}(s,s_+)\approx d_s^\pi(s)\cdot V(s,s_+,z)=d_s^\pi(s)\cdot \phi_{\theta}(s)^T T_\theta(z)\psi_\theta(s_+)$, and we have $\eta= d^\pi_s(s)\cdot T_\theta(z)\psi_\theta(s_+)$.  

\textbf{Case 2: $s_+$ is on the path after reaching $z$.} 

In this case, $\pi_z$ should ``loop back'' to $z$ as soon as the MDP allows.  As the MDP is near-deterministic, suppose the smallest state cycle looping back towards $z$ is $T$. Without loss of generality, we assume $z=s_+$; for other states in the loop, a factor of $\gamma^i$ applies if the agent is still $i$ steps away from $s_+$ in the loop.

Then, we have
\begin{equation}
p(s_{t+i}=s_+ \mid s_t=s) \approx 
\begin{cases}
0, & i \geq 2,\ (i-1)\bmod T \neq 0, \\[6pt]
p(s_{t+1}=s_+ \mid s_t=s), & i \geq 2,\ (i-1)\bmod T = 0.
\end{cases}
\end{equation}
Thus, we have
\begin{equation}
\begin{aligned}
&d^\pi_s\cdot V^\pi(s,s_+,z)\\
=&(1-\gamma)\left(\sum_{t=0}^\infty \gamma^t p(s_t=s)\right)\left[p(s_{t+1}=s_+|s_t=s)+\sum_{i=2}^{\infty}\gamma^{i-1}p(s_{i+t}=s_+|s_t=s)\right]\\
=&\frac{1}{1-\gamma^T} d^\pi(s,s_+),
\end{aligned}
\end{equation}
and thus similar to case 1, $\eta=(1-\gamma^T)d^\pi(s)\cdot T_\theta(z)\psi_\theta(s_+)$.

\end{proof}

\begin{remark}
An example of case 1 in the proof is a navigation task; in a navigation task, the agent moves towards its goal without hovering over any intermediate state $s'$. An example of case 2 is keeping a pendulum inverted after reaching its balance~\citep{pendulum}, which requires the agent to stay at the current state. Note, all MuJoCo environments tested in Sec.~\ref{sec:experiments} satisfy the assumption in Thm.~\ref{thm:main}, as the noise on the initial states is mild while the transition functions are deterministic. We also find LWAIL to be empirically robust when the assumption does not hold, e.g., with noise in transition (Sec.~\ref{sec:ablation}).
\end{remark}

}

\section{Experimental Details}
\label{sec:expdetail}

\subsection{Environments} 
\label{sec:app_env}
We use five MuJoCo~\citep{todorov2012MuJoCo} and D4RL~\citep{d4rl} environments: Maze2d, Hopper, HalfCheetah, Walker2D and Ant. The environment specifications for maze2d are provided in Sec.~\ref{sec:toy}. In this section, we will briefly introduce the other MuJoCo environments. Fig.~\ref{fig:env} provides an illustration of those environments.

\begin{figure}[ht]
\centering
\begin{minipage}[c]{0.32\linewidth}
\subfigure[Hopper]{\includegraphics[width=\linewidth]{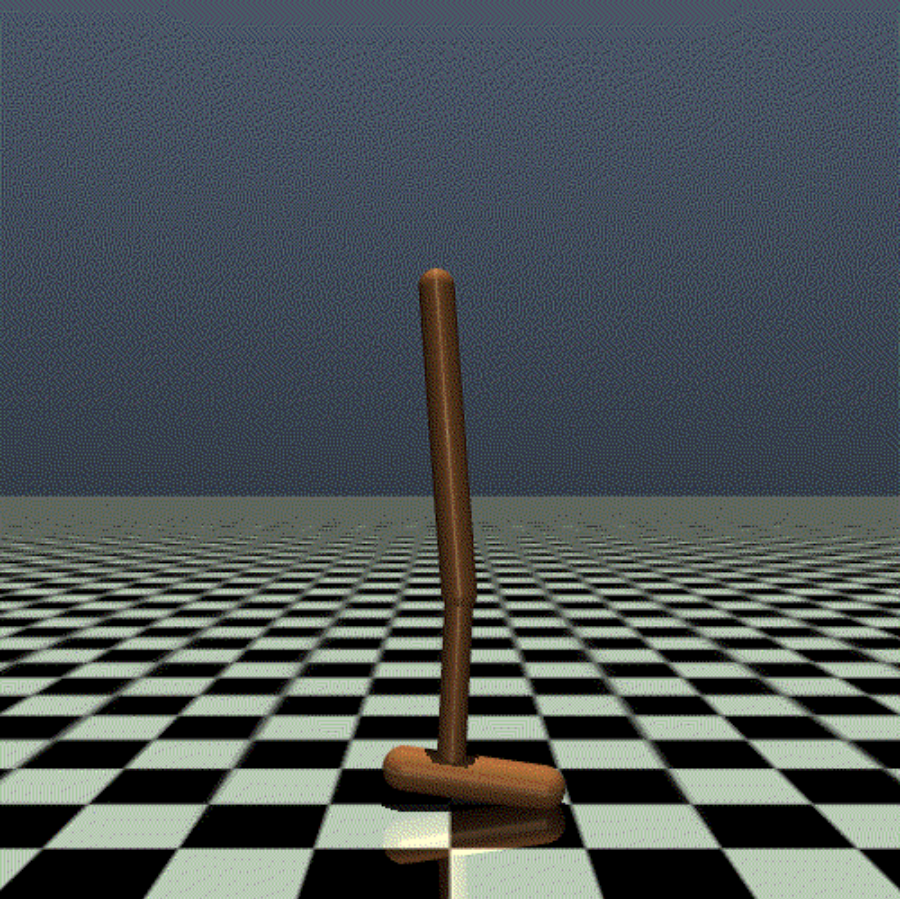}}
\end{minipage}
\begin{minipage}[c]{0.32\linewidth}
\subfigure[Halfcheetah]{\includegraphics[width=\linewidth]{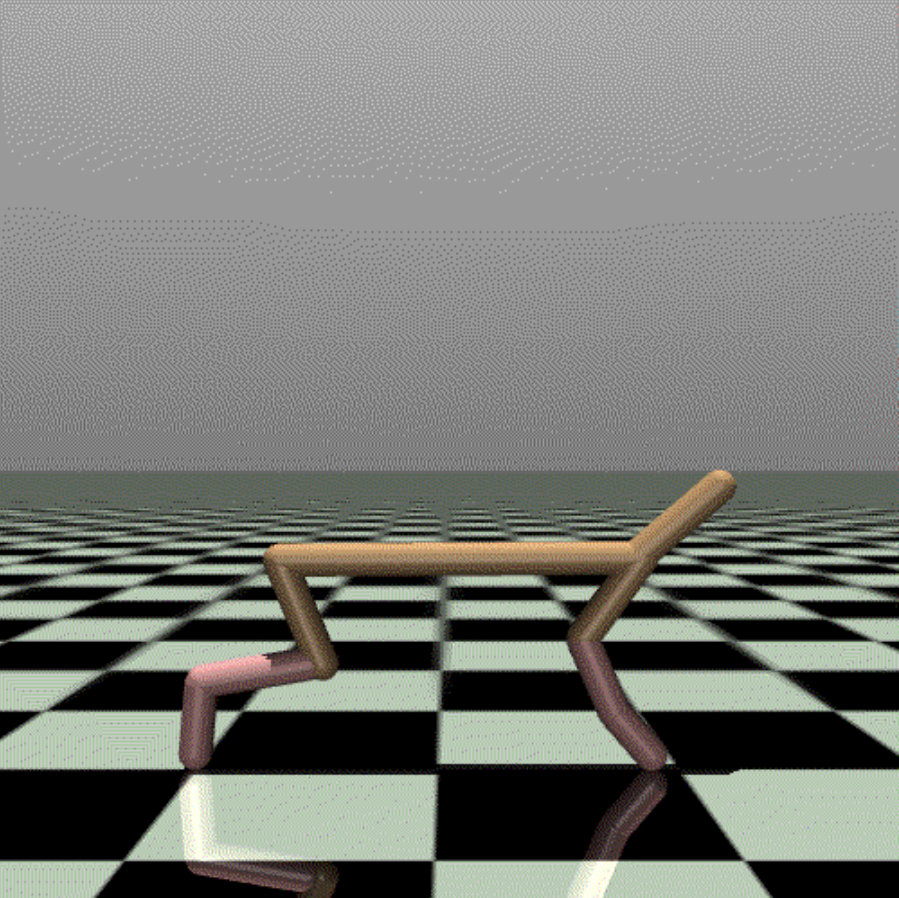}}
\end{minipage}
\begin{minipage}[c]{0.32\linewidth}
\subfigure[Walker2D]{\includegraphics[width=\linewidth]{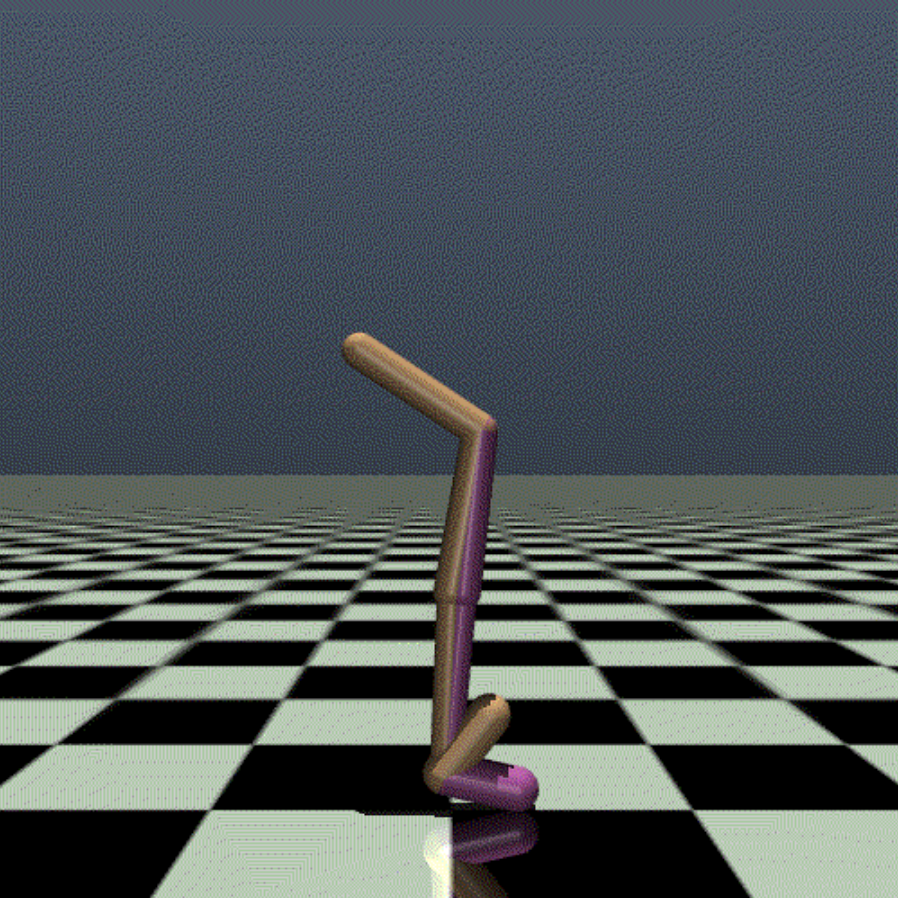}}
\end{minipage}
\begin{minipage}[c]{0.32\linewidth}
\subfigure[Ant]{\includegraphics[width=\linewidth]{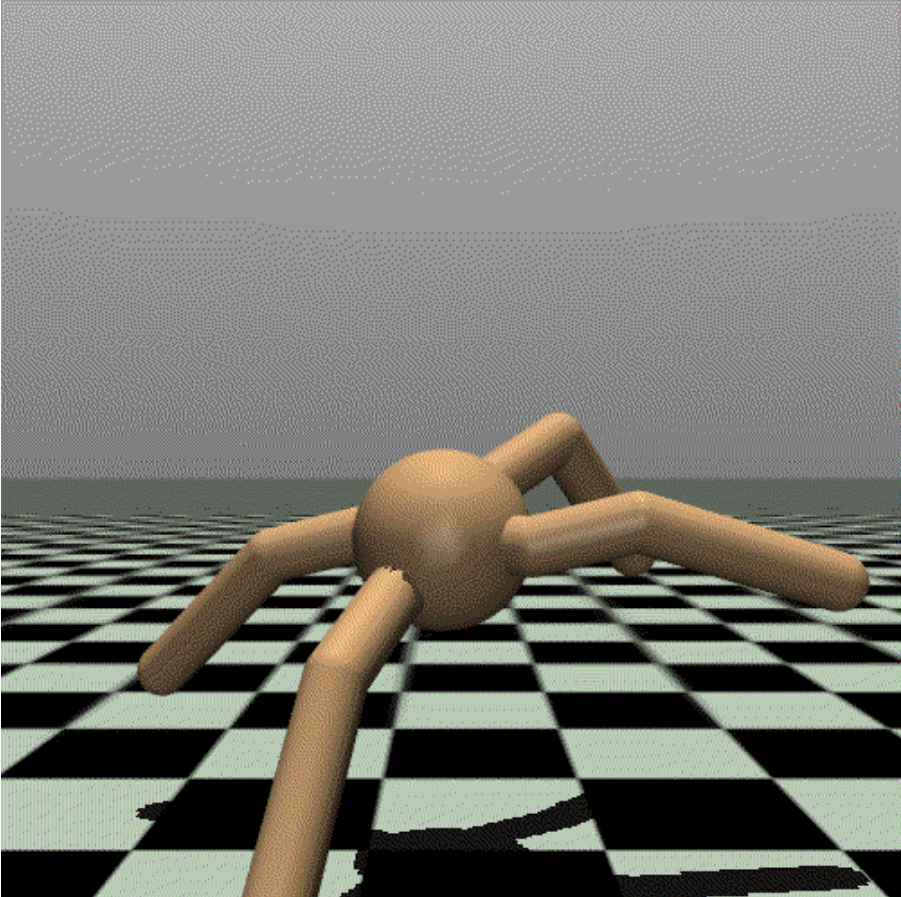}}
\end{minipage}
\begin{minipage}[c]{0.32\linewidth}
\subfigure[Maze2d]{\includegraphics[width=\linewidth]{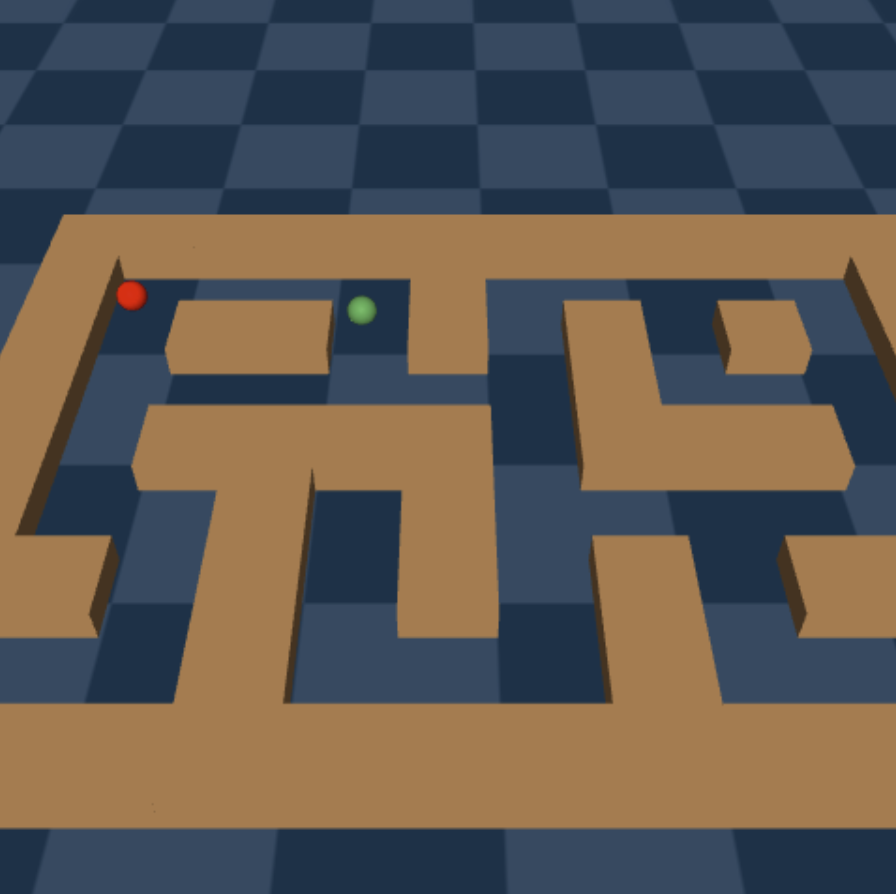}}
\end{minipage}
\begin{minipage}[c]{0.32\linewidth}
\subfigure[Antmaze]{\includegraphics[width=\linewidth]{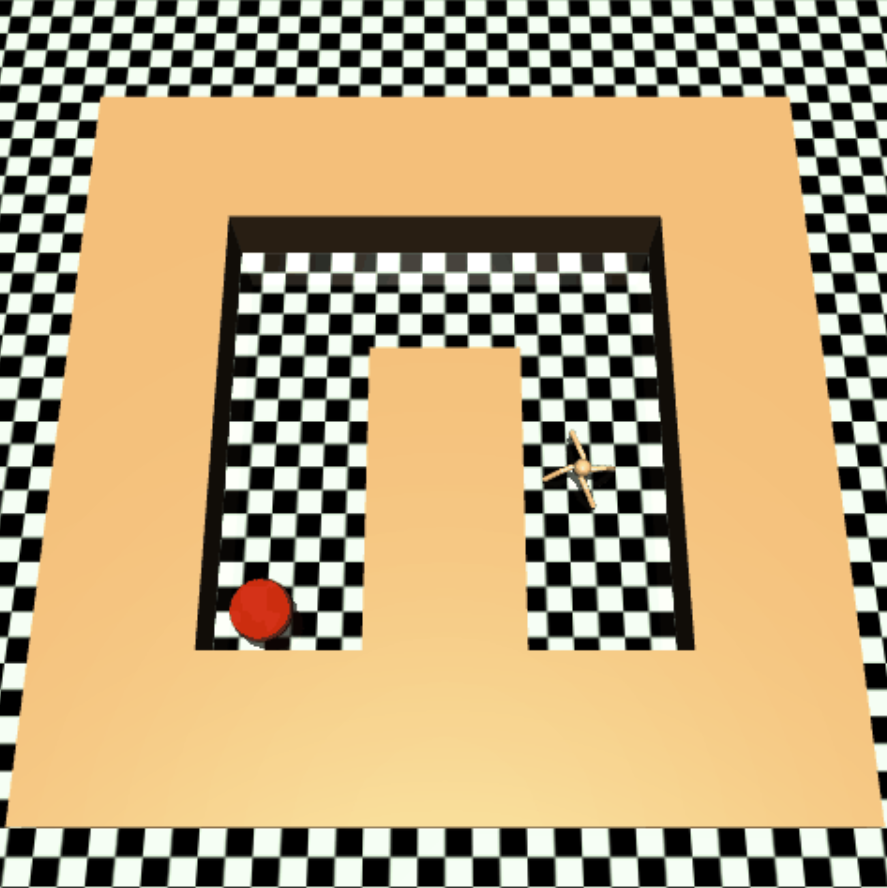}}
\end{minipage}
\caption{Illustration of the MuJoCo~\citep{todorov2012MuJoCo} environments we test in Sec.~\ref{sec:mujoco} and Sec.~\ref{sec:navigation}.}
    \label{fig:env}
\end{figure}

\begin{enumerate}
\item \textbf{Hopper.} The hopper environment (as well as the other three environments) is a locomotion task. In Hopper, the agent needs to control a single-legged robot leaping forward in a 2D space with $x$- and $z$-axis. The $11$-dimensional state space encompasses joint angles and velocities of the robot, while the $3$-dimensional action space corresponds to torques applied on each joint.

\item \textbf{Halfcheetah.} In the Halfcheetah environment, the agent needs to control a cheetah-shaped robot to sprint forward. It also operates in a 2D space with $x$- and $z$-axis, but has a $17$-dimensional state representing joint positions and velocities, and a $6$-dimensional action space that modulates joint torques.

\item \textbf{Walker2D.} As implied by its name, in Walker2D, the agent needs to control a $8$-DoF bipedal robot to walk in the two dimensional space.
It has a $27$-dimensional state space and an $8$-dimensional action space.

\item \textbf{Ant.} Different from the previous three environments, the Ant environment is a 3D setting where the agent navigates a four-legged robotic ant moving towards a particular direction. The state is represented by $111$ dimensions, including joint coordinates and velocities, while the action space has $8$ dimensions.

\item \textbf{Maze2d.} In maze2d, the agent controls a 2-DoF, force-actuated ball in a maze to reach the target goal. The action space is $2$-dimensional, which represents the force applied on each dimension; the observation space is $4$-dimensional, describing the velocity and location of the agent.

\item \textbf{Antmaze.} Antmaze is a more difficult version of maze2d, where the agent needs to navigate through the maze, not with a ball, but with a robotic ant similar to that in the ant environment. The state space is $29$-dimensional and the action space is $8$-dimensional.
 
\end{enumerate}
\subsection{Datasets}
\label{sec:dataset}
For expert datasets of the MuJoCo and navigation environments, we use $1$ trajectory from the D4RL expert dataset (with Apache-2.0 license) as $E$, which has $1000$ steps, and randomly sample trajectories with a total of $10$K transitions from the D4RL random dataset as $I$. Some baselines, such as PWIL~\citep{PWIL} employ a \textit{subsampling} hyperparameter, which creates a low-data training task by taking only one state/state-action pair from every $20$ steps of the expert demonstration. For fairness, we set all baselines' subsampling factors to be $1$, i.e., no subsampling except for Sec.~\ref{sec:incomplete}. 

\begin{table}[ht]
    \centering

    \caption{The basic statistics of the random datasets from D4RL~\citep{d4rl} applied in our MuJoCo experiments (dataset for navigation tasks are random explorations without reward). We use the first 10K transitions in each dataset for ICVF training.} It is apparent that all these data are of very low quality compared to an expert, yet our ICVF-learned metric still works well.
    
    \begin{tabular}{ccc}
         \toprule
         Dataset &  Normalized Reward (Expert is $100$) \\
         \midrule
         Hopper-random-v2 & $1.19\pm 1.16$ \\
         HalfCheetah-random-v2 & $0.07\pm 2.90$ \\
         Walker2d-random-v2 & $0.01\pm 0.09$ \\
         Ant-random-v2 & $6.36\pm 10.07$\\
         \bottomrule
    \end{tabular}
    
    \label{tab:datasize}
\end{table}

\subsection{Hyperparameters}
\label{sec:app_hyp}
Tab.~\ref{tab:hpp} summarizes the hyperparameters for our method. We use the same settings for all environments, and keep hyperparameters identical to TD3~\citep{TD3} and  ICVF~\citep{ICVF}  whenever possible.

\begin{table}[ht]
    \centering
    \caption{Summary of the hyperparameters of LWAIL.}
    \begin{tabular}{cccc}
        \toprule
          Type & Hyperparameter & Value & Note  \\
        \midrule
        ICVF. & Network Size of $\phi$ & [256, 256] & same as original paper\\
        Disc. & Network Size & [64, 64] \\
              & Activation Function & ReLU \\
              & Learning Rate & 0.001 \\
              & Update Epoch & 40 steps \\
              & Update interval & 4000 \\
              & Batch Size & 4000 \\ 
              & Optimizer & Adam \\ 
              & Gradient Penalty coefficient & 10 \\
        Actor & Network Size & [256, 256] \\
              & Activation Function & ReLU \\
              & Learning Rate & 0.0003 \\
              & Training length & 1M steps \\
              & Batch Size & 256 \\
              & Optimizer & Adam \\
        Critic & Network Size & [256, 256] \\
               & Activation Function & ReLU \\
               & Learning Rate & 0.001 \\
               & Training Length & 1M steps \\
               & Batch Size & 256 \\
               & Optimizer & Adam \\
               &  $\gamma$ & 0.99 & discount factor \\
                  
        \bottomrule
    \end{tabular}
    
    \label{tab:hpp}
\end{table}

\subsection{Baselines}
\label{sec:app_baseline}
We use several different Github repositories for our baselines. We use default settings  of those repos, except for the number of expert trajectories (which is set to $1$) and the subsampling factor (see Appendix~\ref{sec:dataset}). We always follow the original hyperparameters in their paper if possible; otherwise, we use hyperparameters from similar experiments. Below are the repos we used in our experiments for each baseline:
\begin{itemize}
    \item \textit{BC~\citep{Behavioralcloning}, GAIL~\citep{GAIL}, AIRL~\citep{AIRL}:} We use the \textit{imitation}~\citep{gleave2022imitation} library, which provides clean implementations of several imitation learning algorithms and has a MIT license.
    \item \textit{OPOLO~\citep{OPOLO}, DACfO~\citep{kostrikov2018discriminator}, BCO~\citep{BCO}, GAIfO~\citep{GAIFO}:} We use OPOLO's official code (\url{https://github.com/illidanlab/opolo-code}), where DACfO, BCO and GAIfO are integrated as baselines, which does not have a license.
    \item \textit{OLLIE~\citep{yue2024ollie}:} We tried to use the official code but it can't be executed due to non-trivial typos. Thus we use their reported numbers on D4RL random dataset instead.
    \item \textit{PWIL:} We use another widely adopted imitation learning repository~\citep{arulkumaran2023pragmatic} (\url{https://github.com/Kaixhin/imitation-learning}), which has an MIT license.
    \item \textit{WDAIL:} We use their official code (\url{https://github.com/mingzhangPHD/Adversarial-Imitation-Learning/tree/master}), which does not have a license.
    \item \textit{IQlearn:} We use their official code (\url{https://github.com/Div99/IQ-Learn/tree/main}) with a research-only license.
    \item \textit{DIFO:} We use their official code (\url{https://github.com/NTURobotLearningLab/DIFO/tree/main}) with a research-only license.
    
    \item \textit{DiffAIL:} We use their official code (\url{https://github.com/ML-Group-SDU/DiffAIL}) which does not have a license. As the training process of DiffAIL is highly unstable (especially on the walker environment), we select the runs that reach 1M step if possible.
    
    \color{black}\item \textit{LS-IQ:} We tried to use their official code (\url{https://github.com/robfiras/ls-iq}) with a MIT license, but the result cannot be reproduced on their dataset on our machine, likely due to an environment version mismatch. Thus, we use their reported numbers with 1 expert trajectory.
    \item \textit{SMODICE:} We use their official code (\url{https://github.com/JasonMa2016/SMODICE}) which does not have a license.
\end{itemize}

\section{More Ablations}
\label{sec:moreablations}
In this section, we provide additional ablation results of our method. We report normalized reward (higher is better) for all results.

\subsection{Complete Training Curves}
Tab.~\ref{fig:main_result_full} presents the full training curves corresponding to the main results.

\begin{figure*}[t]
    \centering
    \includegraphics[width=1.0\linewidth]{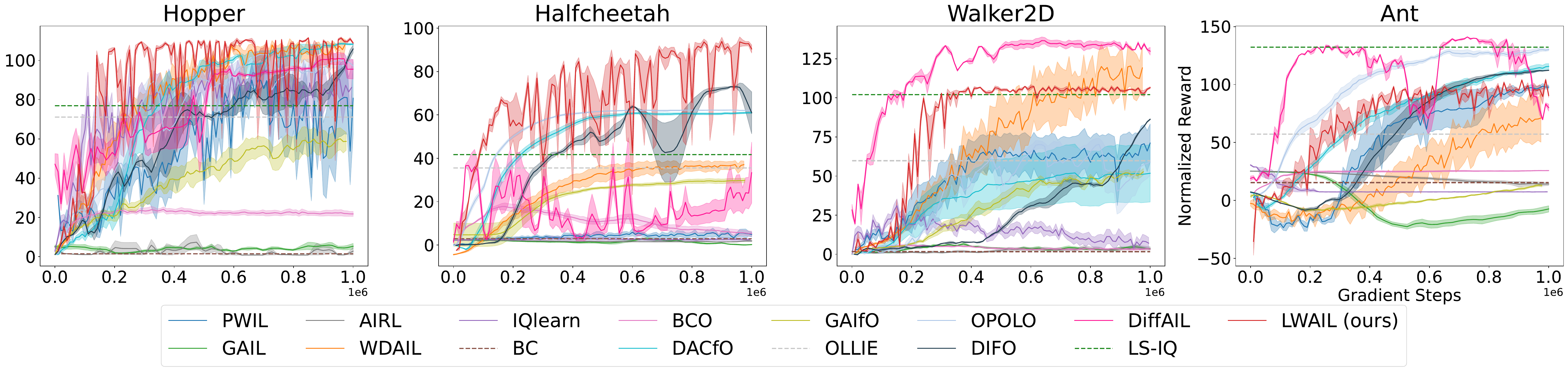}
    \caption{\color{black}Performance on the MuJoCo environments. Full training curves for all tasks. Note that DiffAIL occasionally collapses on Walker2D; we therefore report the average over successful runs only.}
    \label{fig:main_result_full}
\end{figure*}

\subsection{Dataset Ablations}
\subsubsection{Ablations on the Size of the Random Dataset $I$}
\label{sec:randomsize}
To investigate the effect of the size of the auxiliary dataset $I$ used for ICVF pretraining, we vary the number of random transitions from $10$K up to $1$M. Tab.~\ref{tab:size_random} reports the results. We find that our method remains robust even with only $10$K random transitions. While performance on some environments fluctuates when the dataset is very large or moderately sized, the overall results indicate that only a small amount of random data is sufficient to achieve strong performance. This confirms the efficiency of our approach in terms of auxiliary data requirements. 

\begin{table}[t]
\centering
\color{black}
\caption{Ablation on the size of the random dataset $I$ for ICVF pretraining. Our method works well with a small amount of offline data for ICVF.}
\resizebox{0.9\textwidth}{!}{
\begin{tabular}{lccccc}
\toprule
\textbf{Size} & \textbf{Hopper} & \textbf{HalfCheetah} & \textbf{Walker2D} & \textbf{Ant} & \textbf{Average} \\
\midrule
10K  (original) & 108.84 $\pm$ 0.79 & 90.40 $\pm$ 3.71 & 106.51 $\pm$ 1.14 & 90.53 $\pm$ 3.26 &99.07 \\
50K & 92.35 $\pm$ 24.77 & 89.88 $\pm$ 4.59 & 104.54 $\pm$ 1.66 & 89.54 $\pm$ 6.74 & 94.08 \\
100K & 109.74 $\pm$ 1.29 & 91.97 $\pm$ 1.51 & 100.54 $\pm$ 9.42 & 73.48 $\pm$ 26.17 & 93.93 \\
1M & 110.52 $\pm$ 1.06 & 86.71 $\pm$ 5.67 & 105.30 $\pm$ 2.33 & 80.56 $\pm$ 13.09 & 95.77 \\
\bottomrule
\end{tabular}
}
\label{tab:size_random}
\end{table}

\subsubsection{Ablations on the Quality of the Random Dataset $I$}
\label{sec:randomquality}
We further examine whether the auxiliary dataset $I$ must be purely random. To this end, we replace the random dataset with the D4RL \texttt{medium} and \texttt{expert} datasets, and report the results in Tab.~\ref{tab:quality_random}. We observe that our method performs consistently well regardless of whether $I$ is random, medium, or expert. This demonstrates that our approach is fully compatible with existing offline datasets and does not rely on the randomness assumption. Notably, the performance is comparable across different choices of $I$, further validating the robustness and flexibility of our framework. 

\begin{table}[ht]
\centering
\color{black}
\caption{Ablation on the quality of dataset $I$ for ICVF pretraining. Our method works equally well with random or non-random data sources.}
\resizebox{0.9\textwidth}{!}{
\begin{tabular}{lccccc}
\toprule
 & \textbf{Hopper} & \textbf{HalfCheetah} & \textbf{Walker2D} & \textbf{Ant} & \textbf{Average} \\
\midrule
Random dataset & 108.84 $\pm$ 0.79 & 90.40 $\pm$ 3.71 & 106.51 $\pm$ 1.14 & 90.53 $\pm$ 3.26 & 99.07 \\
Medium dataset & 106.24 $\pm$ 2.67 & 90.12 $\pm$ 2.89 & 103.85 $\pm$ 1.96 & 85.04 $\pm$ 3.71 & 96.31 \\
Expert dataset & 108.11 $\pm$ 1.84 & 89.82 $\pm$ 3.24 & 106.42 $\pm$ 1.63 & 84.57 $\pm$ 2.98 & 97.23 \\
\bottomrule
\end{tabular}
}
\label{tab:quality_random}
\end{table}

\subsubsection{Ablations on the Size of the Expert Dataset $E$}
\label{sec:expertsize}
 To demonstrate the robustness of our method even if the expert data is scarce, we test our method with $5$ expert trajectories and the whole expert dataset ($1$M transitions). Tab.~\ref{tab:traj} summarizes the results. We observe consistent compelling performance regardless of the number of expert trajectories. 
\begin{table}[ht]
\centering
\caption{Ablation on using multiple trajectories as expert demonstrations. Our method shows consistent expert-level performance regardless of the number of expert demonstrations.}
\resizebox{0.9\textwidth}{!}{
    \begin{tabular}{lccccc}
    \toprule
     & \textbf{Hopper} & \textbf{HalfCheetah} & \textbf{Walker2D} & \textbf{Ant} & \textbf{Average} \\
    \midrule
    \text{1 trajectory} & 108.84 $\pm$ 0.79 & 90.40 $\pm$ 3.71 & 106.51 $\pm$ 1.14 & 90.53 $\pm$ 3.26 &99.07 \\
    \text{5 trajectories} & 107.42 $\pm$ 6.83 & 91.96 $\pm$ 2.34 & 107.05 $\pm$ 1.42 & 88.37 $\pm$ 9.87 & 98.70 \\
    \text{10 trajectories} & 108.11 $\pm$ 5.21 & 92.15 $\pm$ 2.67 & 106.64 $\pm$ 1.63 & 89.42 $\pm$ 8.11 & 99.08 \\
    \text{All expert dataset} & 109.21 $\pm$ 3.64 & 94.07 $\pm$ 2.98 & 104.59 $\pm$ 2.05 & 90.72 $\pm$ 9.14 & 99.65 \\
    \bottomrule
\end{tabular}%
}

\label{tab:traj}
\end{table}

\subsubsection{Expert Dataset $E$ with Subsampled Trajectories}
\label{sec:incomplete}
To validate the robustness of our policy, we provide results with subsampled expert trajectories, a widely-adopted scenario in many prior works such as PWIL and IQ-learn. Only a small portion of the complete expert trajectories are present. Our subsample ratio is $10$, i.e., we take 1 expert state pair out of adjacent 10 pairs. Tab.~\ref{tab:subsample} summarizes the results, which show that 1) our method with subsampled trajectories outperforms Wasserstein-based baselines such as WDAIL~\citep{WDAIL} and IQlearn~\citep{IQlearn}, and 2) the performance of our method is not affected by incomplete expert trajectories.

\begin{table}[ht]

\centering
\caption{Ablation on subsampled expert trajectories. The result shows that LWAIL is robust to subsampled expert demonstrations and outperforms other baselines with subsampled expert demonstrations.}
\resizebox{0.9\textwidth}{!}{
\begin{tabular}{lccccc}
\toprule
 & \textbf{Hopper} & \textbf{HalfCheetah} & \textbf{Walker2D} & \textbf{Ant} & \textbf{Average} \\
\midrule
\text{LWAIL} & 108.84 $\pm$ 0.79 & 90.40 $\pm$ 3.71 & 106.51 $\pm$ 1.14 & 90.53 $\pm$ 3.26 &\textbf{99.07} \\
\text{LWAIL w./ subsample} & 109.00 $\pm$ 0.46 & 86.73 $\pm$ 7.02 & 106.13 $\pm$ 2.47 & 83.21 $\pm$ 8.80 & \textbf{96.27} \\
\text{WDAIL w./ subsample} & 108.21 $\pm$ 4.90 & 35.41 $\pm$ 2.07 & 114.32 $\pm$ 2.07 & 83.87 $\pm$ 10.92 & 85.45 \\
\text{IQlearn w./ subsample} & 60.26 $\pm$ 14.21 & 4.12 $\pm$ 1.03 & 8.31 $\pm$ 1.48 & 5.32 $\pm$ 3.87 & 19.50 \\
\bottomrule
\end{tabular}
}

\label{tab:subsample}
\end{table}

\subsection{Algorithm Component Ablations}

\subsubsection{Downstream RL algorithm}
\label{sec:downstreamrl}
We used TD3 as our downstream RL algorithm rather than other algorithms such as PPO and DDPG with an entropy regularizer. Our choice is motivated by better efficiency and stability, especially because TD3 is an off-policy algorithm which is more robust to the shift of the reward function and our adversarial training pipeline. We ablate this choice of the downstream RL algorithm  and show that TD3 outperforms PPO and DDPG in our framework. Tab.~\ref{tab:downstream} summarizes the results.
\begin{table}[ht]
\centering
\caption{Ablation on downstream RL algorithms. The result shows that TD3 works much better than PPO and DDPG.}
\resizebox{0.9\textwidth}{!}{
\begin{tabular}{lccccc}
\toprule
 & \textbf{Hopper} & \textbf{HalfCheetah} & \textbf{Walker2D} & \textbf{Ant} & \textbf{Average} \\
\midrule
\text{LWAIL+TD3 (original)} & 108.84 $\pm$ 0.79 & 90.40 $\pm$ 3.71 & 106.51 $\pm$ 1.14 & 90.53 $\pm$ 3.26 &\textbf{99.07} \\
\text{LWAIL+PPO} & 65.21 $\pm$ 4.81 & 1.02 $\pm$ 0.21 & 24.13 $\pm$ 2.14 & 9.12 $\pm$ 0.85 & 24.87 \\
\text{LWAIL+DDPG} & 72.32 $\pm$ 21.76 & 79.52 $\pm$ 5.21 & 5.86 $\pm$ 3.13 & -41.67 $\pm$ 9.37 & 29.01 \\
\bottomrule
\end{tabular}
}

\label{tab:downstream}
\end{table}

\subsubsection{Sigmoid Reward Mapping}
\label{sec:sigmoidreward}
We adopt the sigmoid function to regulate the output of our neural networks for better stability (similar to WDAIL~\citep{WDAIL}). However, one cannot naively apply the sigmoid to the reward function for better performance. To show this, we compare to TD3 with a sigmoid function applied to the ground-truth reward. The result is illustrated in Tab.~\ref{tab:naive}. The result shows that a naive sigmoid mapping of the reward does not improve TD3 results. 

\begin{table}[ht]
    \centering
    \color{black}
    \caption{Results of TD3 with and without sigmoid applied on the ground-truth reward. The results show that applying the sigmoid function does not yield better performance.}
    
    \resizebox{0.9\textwidth}{!}{
    \begin{tabular}{ccccccc}
        \toprule
        Environment & \bf{Hopper} & \bf{HalfCheetah} & \bf{Walker2D} & 
        \bf{Ant} & \bf{Maze2D} &\bf{Average}  \\
        \midrule
        TD3 & 105.54 $\pm$ 1.32 & 76.13 $\pm$ 3.85 & 89.68 $\pm$ 0.84 & 89.21 $\pm$ 1.90 & 120.14 $\pm$ 0.32 & 96.14 \\
        TD3+Sigmoid reward & 84.23 $\pm$ 4.86 & 30.76 $\pm$ 14.33 & 42.55 $\pm$ 9.07 & 34.79 $\pm$ 20.01 & 119.03 $\pm$ 0.18 & 62.27\\
        \bottomrule
    \end{tabular}
    }
    
    \label{tab:naive}
\end{table}

\subsection{Detailed Performance Analysis}

\subsubsection{Hyperparameter Sensitivity analysis}
\label{sec:hyperparam}

We present additional ablation studies on hyperparameters in Tab.~\ref{tab:hyperparameter}. Please refer to Tab.~\ref{tab:hpp} for detailed definitions and default values. While Adversarial Imitation Learning methods are typically sensitive to hyperparameter configurations, our results demonstrate that LWAIL maintains robust performance across a reasonable range of variations.

Notably, we did not perform an extensive grid search to maximize performance for the main results; instead, we adopted standard settings from prior work to ensure a fair comparison. As shown in the table, specific parameter tuning (e.g., update\_interval) can yield results even superior to the reported defaults. Regarding the update interval, we observe that while it has a marginal impact on asymptotic performance, it significantly influences training stability and the smoothness of learning curves.

\begin{table}[ht]
\centering
\caption{Ablation study on hyperparameters. We compare different Learning Rates (LR), discriminator epochs, and discriminator update intervals.}
\resizebox{0.9\textwidth}{!}{
\begin{tabular}{lccccc}
\toprule
 & \textbf{Hopper} & \textbf{HalfCheetah} & \textbf{Walker2D} & \textbf{Ant} & \textbf{Average} \\
\midrule
\text{LWAIL (Default)} & 110.12 $\pm$ 1.05 & 87.20 $\pm$ 5.60 & 104.88 $\pm$ 2.40 & 81.05 $\pm$ 12.90 & 95.81 \\
\midrule
\text{Critic LR=3e-3} & 109.15 $\pm$ 1.47 & 66.58 $\pm$ 8.31 & 99.25 $\pm$ 2.44 & 74.50 $\pm$ 14.92 & 87.37 \\
\text{Critic LR=3e-4} & 108.96 $\pm$ 1.25 & 85.91 $\pm$ 5.42 & 103.87 $\pm$ 2.73 & 83.84 $\pm$ 13.90 & 95.65 \\
\text{Actor LR=1e-3} & 110.20 $\pm$ 1.42 & 85.34 $\pm$ 6.89 & 104.55 $\pm$ 2.67 & 76.26 $\pm$ 14.11 & 94.09 \\
\text{Actor LR=1e-4} & 111.52 $\pm$ 1.31 & 83.76 $\pm$ 6.75 & 101.85 $\pm$ 2.59 & 84.35 $\pm$ 12.66 & 95.37 \\
\text{Disc. LR=3e-3} & 107.47 $\pm$ 1.38 & 88.21 $\pm$ 6.12 & 103.62 $\pm$ 2.70 & 82.72 $\pm$ 14.03 & 95.51 \\
\text{Disc. LR=1e-4} & 109.25 $\pm$ 1.28 & 84.66 $\pm$ 5.54 & 101.12 $\pm$ 2.51 & 83.48 $\pm$ 12.91 & 94.63 \\
\midrule
\text{Update Epoch=10} & 108.15 $\pm$ 1.17 & 81.12 $\pm$ 5.44 & 98.00 $\pm$ 2.87 & 80.59 $\pm$ 14.33 & 91.97 \\
\text{Update Epoch=50} & 105.82 $\pm$ 1.33 & 86.85 $\pm$ 6.22 & 99.26 $\pm$ 2.46 & 74.97 $\pm$ 13.77 & 91.73 \\

\midrule
\text{Update Interval=1k} & 109.12 $\pm$ 1.43 & 88.54 $\pm$ 6.59 & 102.46 $\pm$ 2.40 & 85.95 $\pm$ 13.66 & \textbf{96.52} \\
\text{Update Interval=2k} & 106.29 $\pm$ 1.19 & 89.37 $\pm$ 5.77 & 103.19 $\pm$ 2.28 & 81.22 $\pm$ 12.84 & 95.02 \\
\text{Update Interval=8k} & 103.12 $\pm$ 1.51 & 86.79 $\pm$ 6.44 & 101.41 $\pm$ 2.65 & 88.67 $\pm$ 13.92 & 95.00 \\
\bottomrule
\end{tabular}
}
\label{tab:hyperparameter}
\end{table}

\subsubsection{More Embedding Visualizations}

\label{sec:morevis}
We only visualize our embedding for HalfCheetah and Walker2d   in the main paper due to space limits. In Fig.~\ref{fig:vis_app}, we visualize results for the embedding learned on the Hopper and Ant environments.

\begin{figure}
    \centering
    \centering
    \subfigure[Hopper]{
    \includegraphics[height=3.3cm]{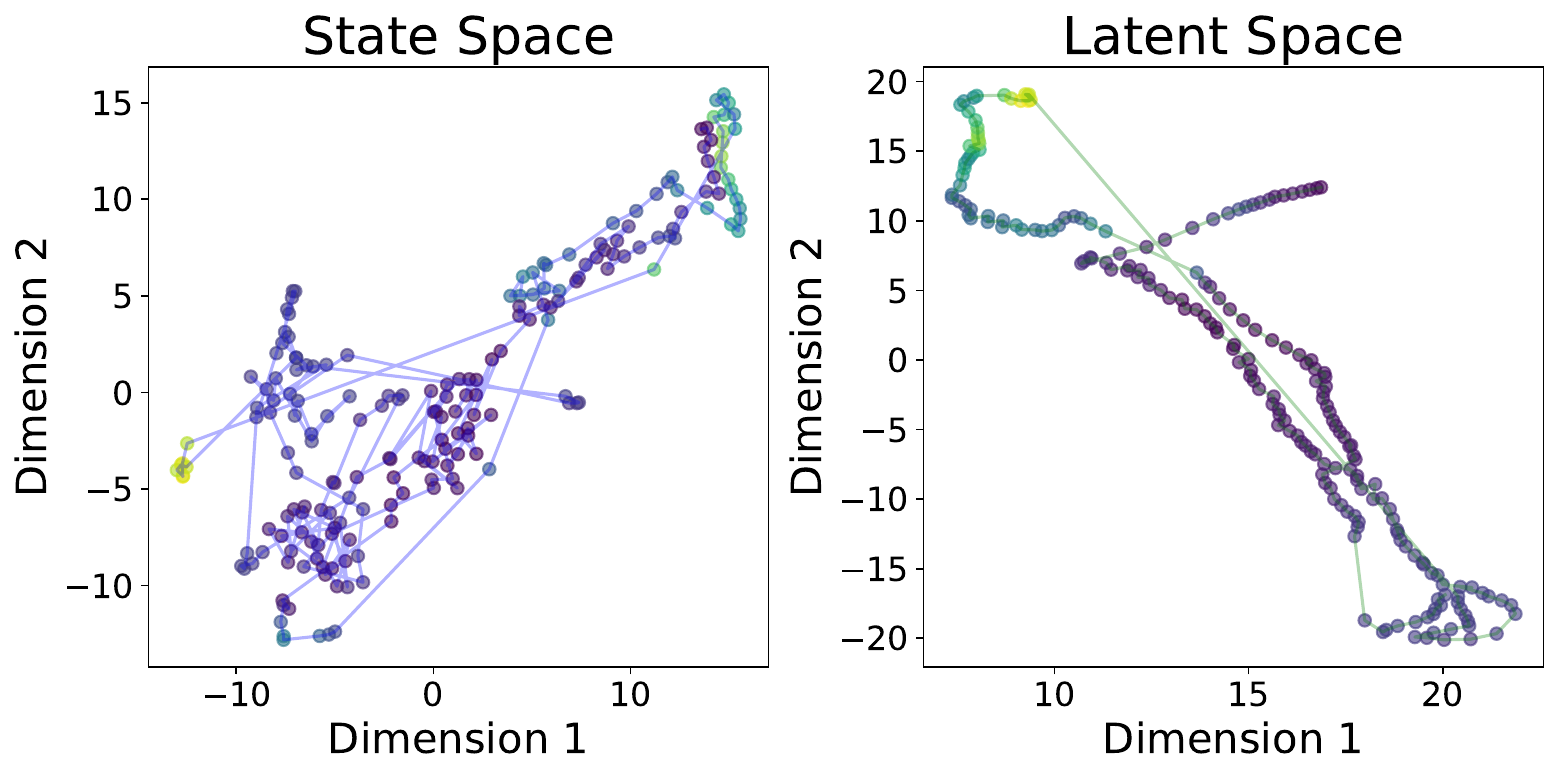}}
    \subfigure[Ant]{
    \includegraphics[height=3.3cm]{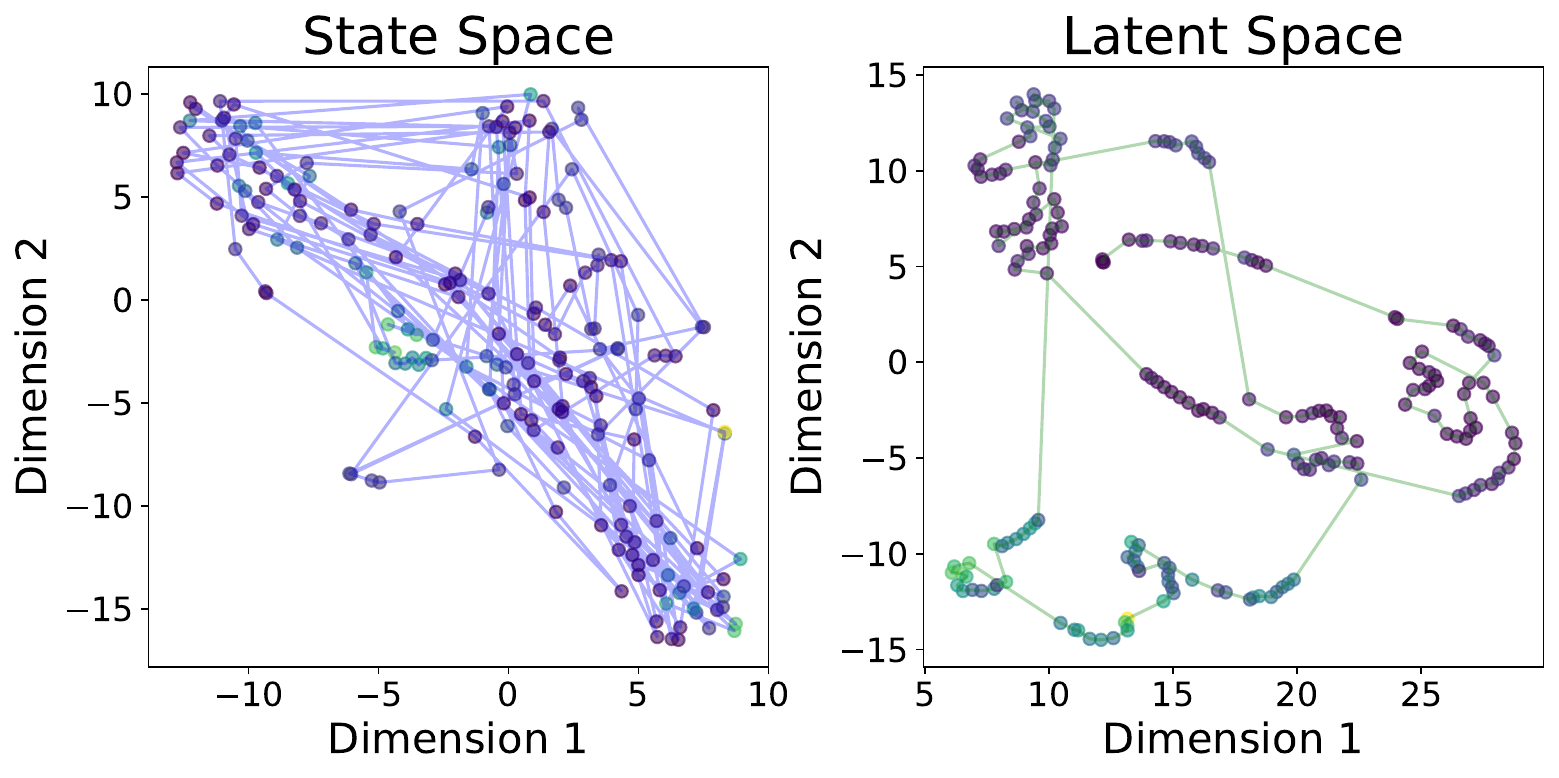}}
    \caption{Visualization of the same trajectory in the original state space and the embedding (latent) space. The color of the points represents the ground-truth reward of the state. We observe that an ICVF-trained embedding provides a much more dynamics-aware metric than the vanilla Euclidean distance. See Fig~\ref{fig:tsne} in the main paper for HalfCheetah and Walker2d environment visualizations.
    }
    \label{fig:vis_app}
\end{figure}

\subsubsection{Pseudo-Reward Metric Curve}
\label{sec:pseudoreward}
To validate the effect of using sigmoid and ICVF embedding for our pseudo-reward generated by $f$, we conduct two experiments: 

1) Run a standard setting of LWAIL, and compare pseudo-rewards generated by $f$ with the sigmoid function, and pseudo-rewards without the sigmoid function for the MuJoCo environments. This is illustrated in Fig.~\ref{fig:metric}.

2) Run standard LWAIL and LWAIL without ICVF embedding, and compare pseudo-rewards (with the sigmoid function) for the MuJoCo environments. This is illustrated in Fig.~\ref{fig:metric_emb}.

The value is the total reward accumulated over one evaluation trajectory. The result clearly shows that both ICVF-embedding and the sigmoid function are very important for pseudo-reward stability and positive correlation with ground-truth reward.

\begin{figure}[htbp]
    \centering
    \includegraphics[width=0.9\textwidth]{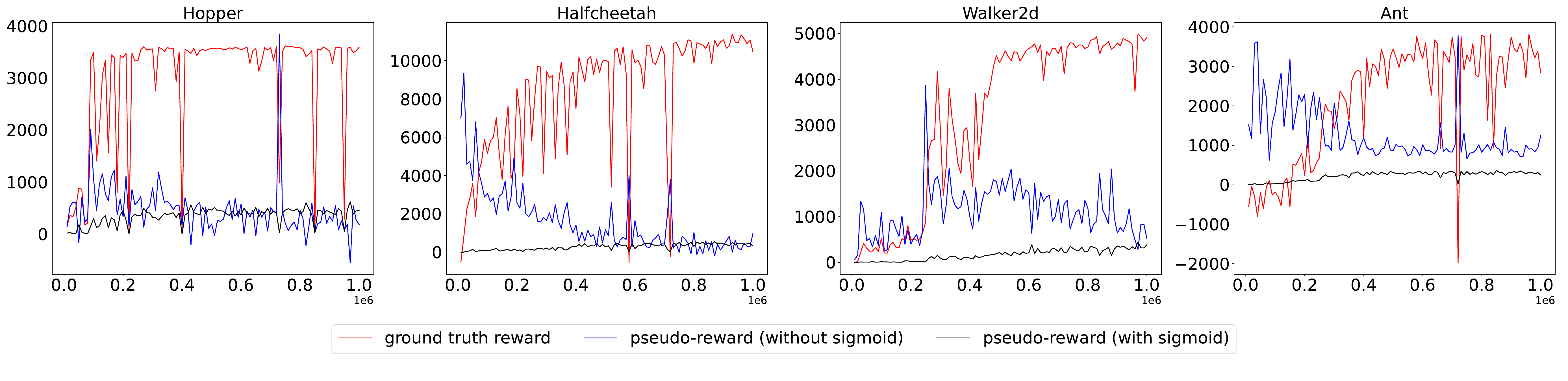}
    \caption{The reward curves of pseudo- and ground-truth reward in a single training session, where pseudo-reward is generated by $f$ following Alg.~\ref{alg:main} and serves as the reward signal for our downstream TD3. We note that the pseudo-reward is much more stable and positively correlated with the ground-truth reward when using a sigmoid function.}
    \label{fig:metric}
\end{figure}

\begin{figure}[htbp]
    \centering
    \includegraphics[width=0.9\textwidth]{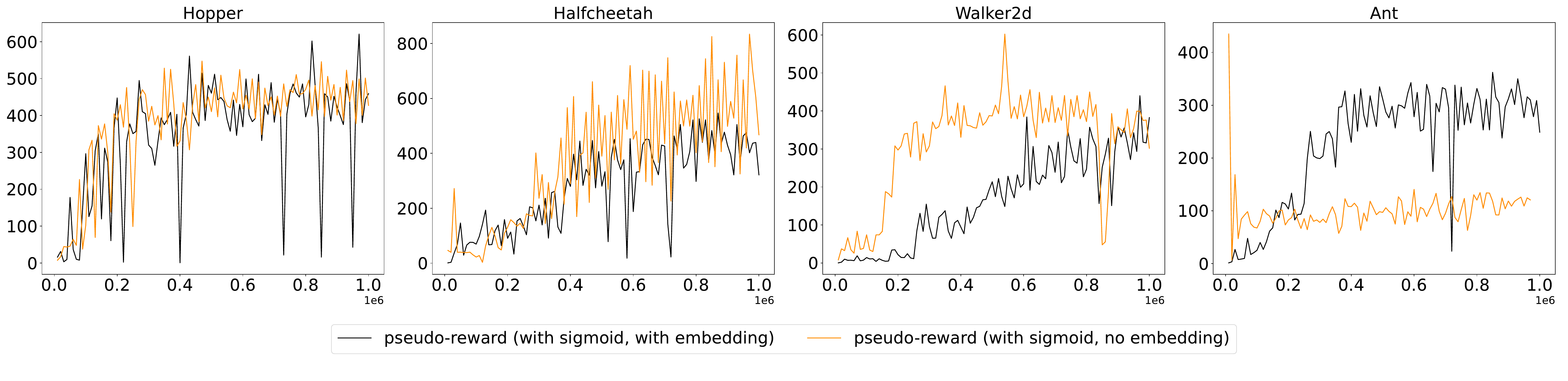}
    \caption{The pseudo-reward curves with and without ICVF embedding in a single training session. We note that without ICVF, the pseudo-reward is generally less stable (e.g., fluctuation in halfcheetah and sudden drop in walker2d and ant) and sometimes less correlated with ground-truth reward (e.g., ant environment).}
    \label{fig:metric_emb}
\end{figure}

\subsubsection{Comparison to Ground-Truth Reward Signal}
\label{sec:gtrs}
To show the validity of our learned reward signal, we compare our method with ground-truth guided TD3. Mean and standard deviation are obtained from three independent runs with different seeds. The result is illustrated in Fig.~\ref{fig:ablationtd3}. The result shows that LWAIL with our learned reward can achieve comparable or better performance than a human-designed ground-truth reward.

\begin{figure}[ht]
    \centering
    \includegraphics[width=0.95\linewidth]{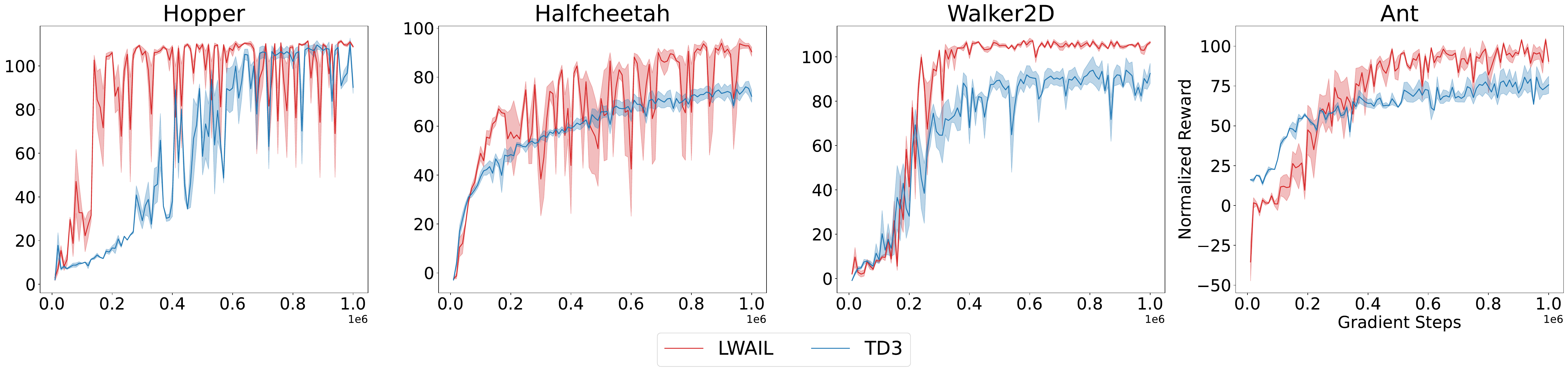}
    \caption{Ablations of the pseudo reward learned by the discriminator and the ground-truth reward. Our learned reward (red) performs equal to or better than TD3 (blue) with ground-truth reward.}
    \label{fig:ablationtd3}
\end{figure}

\subsection{Baseline Analysis}

\subsubsection{Other Methods with Longer Training}
\label{sec:otherlonger}
As we use an extra (albeit very small) random offline dataset $I$, raises a potential concern: is the comparison of our method unfair to baselines with purely online imitation learning? To address such concern, we use more steps for several baselines and compare the performance of the baselines and our method. This is illustrated in Tab.~\ref{tab:morestep}. We observe that our method still works better than baselines which use more offline (for OLLIE) or online (for online imitation learning methods) data.

\begin{table}[ht]
    \centering
    \scriptsize
    \color{black}
    \begin{tabular}{cccccc}
    \toprule
    Environment & \bf{Hopper} & \bf{HalfCheetah} & \bf{Walker2D} & 
        \bf{Ant} & \bf{Average}  \\ 
    \midrule
         OLLIE (1M offline + 0.1M online) & {71.10 $\pm$ 3.5*}  & {35.50 $\pm$ 4.0*} & {59.80 $\pm$8.5*} & {57.10 $\pm$ 7.0*} &{55.87*} \\ 
         OPOLO (2M online) & 95.05 $\pm$ 6.50 & 53.15 $\pm$ 8.90 & 43.25 $\pm$ 15.10 & 119.20 $\pm$ 4.65 & 77.66 \\
         DACfO (2M online) & 101.30 $\pm$ 0.58 & 65.05 $\pm$ 1.05 & 40.70 $\pm$ 42.00 & 106.10 $\pm$ 12.50 & 78.13 \\
         DIFO (2M online) & 95.14 $\pm$ 14.30 & 73.87 $\pm$ 17.86 & 98.07 $\pm$ 13.88 & 87.62 $\pm$ 10.75 & 88.68 \\
         DiffAIL (2M online) & 108.56 $\pm$ 0.07 & 24.38 $\pm$ 21.45 & 73.06$\pm$ 19.40 & 114.62 $\pm$ 4.73 & 80.16 \\
         LWAIL (0.01M offline + 1M online) & 108.84 $\pm$ 0.79 & 90.40 $\pm$ 3.71 & 106.51 $\pm$ 1.14 & 90.53 $\pm$ 3.26 & 99.07 \\ 
         LWAIL (1M offline + 1M online) & 110.52 $\pm$ 1.06 & 86.71 $\pm$ 5.67 & 105.30 $\pm$ 2.33 & 80.56 $\pm$ 13.09 & 95.77 \\
    \bottomrule
    \end{tabular}
    \caption{Ablations of LWAIL and baselines trained with more steps (LWAIL with 1M offline uses 1M offline random data, and 0.01M uses 0.01M offline data). Results with * are reported from the main paper.  The result clearly shows that our method still works better than baselines trained to see an equal number of or even more data. Note, as DiffAIL consistently collapses before 2M steps, we use the average of the longest runs.}
    \label{tab:morestep}
\end{table}

\subsection{More Environments}

{\color{black}
\subsubsection{Ball-in-Cup Environment}
\label{sec:ballincup}
To test our method on more diverse environments, we report the performance of our method on the ball-in-cup environment of the DeepMind control task~\citep{tassa2018deepmind}. This is an 8-dim state and 2-dim action task where the agent needs to put a ball into a connected cup, as illustrated in Fig.~\ref{fig:extraenv}.

\begin{figure}[ht]
    \centering
    \includegraphics[height=3cm]{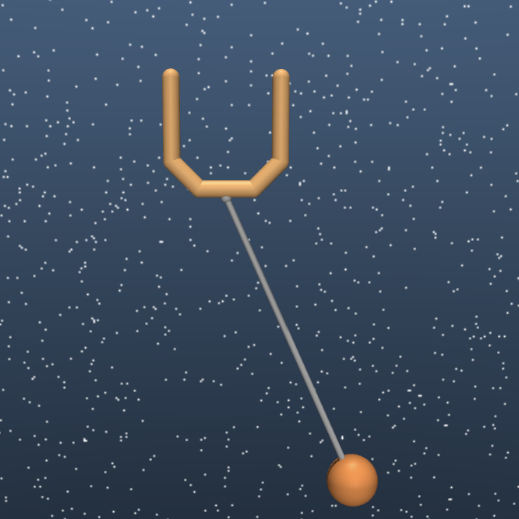}
    \caption{An illustration of the ball-in-cup environment.}
    \label{fig:extraenv}
\end{figure}

We report the performance of our method with and without ICVF in Tab.~\ref{tab:bic}. The result shows that our method works well on this task, and ICVF plays a vital role.

\begin{table}[htbp]
    \color{black}
    \centering
    \begin{tabular}{cc}
    \toprule
        LWAIL & LWAIL w./o. ICVF \\
    \midrule
        84.85 $\pm $ 3.55 & 14.80 $\pm$ 4.52 \\
    \bottomrule
    \end{tabular}
    \caption{Normalized reward (0-100 as in the main paper, higher is better) of LWAIL with and without ICVF on the ball-in-cup environment. ICVF plays a vital role on this environment.}
    \label{tab:bic}
\end{table}

}

\subsubsection{Mismatched Dynamics}
\label{sec:mismatch}
It is worth noting that the very motivation of LWAIL is to find a latent space that aligns well with the environment's true dynamics. Despite this, we agree that there might be cases where the latent space employed in LWAIL does not align with the true dynamics due to inaccurate data, e.g., mismatched dynamics between expert demonstrations and the actual environment. To test such cases, we use the halfcheetah mismatched experts scenario analyzed in SMODICE~\citep{smodice}: for expert demonstration, the torso of the cheetah agent is halved in length, thus causing inaccurate alignment. We compared our method with the results reported in the SMODICE paper.  Tab.~\ref{tab:mismatch} summarizes the final average normalized reward (higher is better). Results show that 1) our method works better than several baselines including SMODICE; and 2) our method is robust to mismatched dynamics. 
\begin{table}[ht]
\centering
\caption{Performance on the Halfcheetah environment with mismatched dynamics. Our method outperforms baselines.}
\begin{tabular}{cc}

\toprule
 & \textbf{Normalized Reward} \\
 \midrule
LWAIL & \bf{24.31 $\pm$ 4.51}\\
SMODICE & \bf{23.2 $\pm$ 7.43} \\
SAIL & 0 $\pm$ 0 \\
ORIL & 2.47 $\pm$ 0.32 \\
\bottomrule
\end{tabular}

\label{tab:mismatch}
\end{table}

\section{List of Notations}
\label{sec:notation}

Tab.~\ref{tab:notelist} summarizes the symbols that appear in our paper.

\begin{table}[ht]
    \centering
    \caption{A list of symbols used in the paper. The first part focuses on RL-specific symbols. The second part details Wasserstein-specific notation. The third part summarizes ICVF-specific symbols (Sec.~\ref{sec:latentspace}).}
    {\small
    \setlength{\tabcolsep}{2pt}
    \begin{tabular}{ccp{8cm}}
       \toprule
        Name & Meaning & Note \\
        \midrule
        $\mathcal{S}$ & State space & \\
        $s$ & State & $s\in \mathcal{S}$\\
        $\mathcal{A}$ & Action space & \\
        $a$ & Action & $a\in \mathcal{A}$ \\
        $t$ & Time step & $t\in\{0,1,2,\dots\}$ \\
        $\gamma$ & Discount factor & $\gamma\in [0, 1)$\\
        $r$ & Reward function &  $r(s,a)$ for single state-action pair \\
        $P$ & Transition & $P(s'|s,a)\in\Delta(\mathcal{S})$\\
        $E$ & Expert dataset & state-only expert demonstrations \\
        $I$ & Random dataset & state-action trajectories of very low quality \\
        $\pi$ & Learner policy & The policy we aim to optimize \\
        $d^\pi_{s}$ & State occupancy of $\pi$ & $d^\pi_s(s)=(1-\gamma)\sum_{i=0}^\infty\gamma^i\text{Pr}(s_i=s)$, where $s_i$ is the $i$-th state in a trajectory \\
        $d^\pi_{ss}$ & State-pair occupancy of $\pi$ & $d^\pi_s(s,s')=(1-\gamma)\sum_{i=0}^\infty\gamma^i\text{Pr}(s_i=s,s_{i+1}=s')$, where $s_i$ is the $i$-th state in a trajectory\\
        $d^E_{ss}$ & State-pair occupancy of expert policy &  The expert policy here is empirically induced from $E$\\
        \midrule
        $c$ & Underlying metric for Wasserstein distance \\
        $f$ & Dual function / Discriminator & Dual function in Rubinstein dual form of 1-Wasserstein distance; also a discriminator from adversarial perspective and a reward model from IRL perspective \\
        $\Pi$ & Wasserstein matching variable & In our case, $\sum_{s\in \mathcal{S}}\Pi(s,s')=d^E_s(s')$, $\sum_{s'\in \mathcal{S}}\Pi(s,s')=d^\pi_s(s)$\\
        $\mathcal{W}_1$ & 1-Wasserstein distance\\
        \midrule
         $s_+$ & Outcome state & \\
         $z$ & Latent intention & \\
         $V$ & Value function & takes $s,s_+,z$ as input in ICVF; only takes $s$ in normal RL\\
         $V_{\text{target}}$ & Target value & target value function in the critic objective of RL \\
         $\mathbb{I}$ & indicator function & $\mathbb{I}[\text{condition}]=1$ if the condition is true, and $=0$ otherwise \\
         $\phi$ & State representation (embedding) & the embedding function we use for $f$; $\phi(s)\in\mathbb{R}^d$\\
         $T$ & Counterfactual intention & $T(z)\in\mathbb{R}^{d\times d}$\\
         $\psi$ & Outcome representation & $\psi(s_+)\in\mathbb{R}^d$\\
         $\alpha$ & ICVF constant & $\alpha\in(0.5,1]$ \\
         $\sigma$ & Sigmoid function & \\
         \bottomrule
    \end{tabular}
    \newline
    
    \label{tab:notelist}
    }
\end{table}

\section{Computational Resources}
\label{sec:comp}
    All our experiments are performed with an Ubuntu 20.04 server, which has 128 AMD EPYC 7543 32-Core Processor and a single NVIDIA RTX A6000 GPU.
    The overall training time of our method is comparable to, or even faster than, that of existing methods. Specifically, ICVF training takes approximately 80–90 minutes, while online training requires about 65–75 minutes for MuJoCo environments. In contrast, PWIL requires 8–9 hours to complete, and DACfO takes 5–6 hours.

\end{document}